% CVPR 2023 Paper Template
% based on the CVPR template provided by Ming-Ming Cheng (https://github.com/MCG-NKU/CVPR_Template)
% modified and extended by Stefan Roth (stefan.roth@NOSPAMtu-darmstadt.de)

\documentclass[10pt,twocolumn,letterpaper]{article}

%%%%%%%%% PAPER TYPE  - PLEASE UPDATE FOR FINAL VERSION
%\usepackage[review]{cvpr}      % To produce the REVIEW version
%\usepackage{cvpr}              % To produce the CAMERA-READY version
\usepackage[pagenumbers]{cvpr} % To force page numbers, e.g. for an arXiv version

% Include other packages here, before hyperref.
\usepackage{graphicx}
\usepackage{amsmath}
\usepackage{amssymb}
\usepackage{booktabs}
\usepackage{amsthm}
\usepackage[accsupp]{axessibility}  % Improves PDF readability for those with disabilities.
\usepackage{longtable}

\usepackage{subcaption}
%\captionsetup{compatibility=false}

\makeatletter
\@namedef{ver@everyshi.sty}{}
\makeatother

\usepackage{color}
\usepackage{tikz}

\newtheorem{proposition}{Proposition}
\newtheorem{lemma}{Lemma}

% It is strongly recommended to use hyperref, especially for the review version.
% hyperref with option pagebackref eases the reviewers' job.
% Please disable hyperref *only* if you encounter grave issues, e.g. with the
% file validation for the camera-ready version.
%
% If you comment hyperref and then uncomment it, you should delete
% ReviewTempalte.aux before re-running LaTeX.
% (Or just hit 'q' on the first LaTeX run, let it finish, and you
%  should be clear).
\usepackage[pagebackref,breaklinks,colorlinks]{hyperref}
%\usepackage{cleveref}

% Support for easy cross-referencing
\usepackage[capitalize]{cleveref}
\crefname{section}{Sec.}{Secs.}
\Crefname{section}{Section}{Sections}
\Crefname{table}{Table}{Tables}
\crefname{table}{Tab.}{Tabs.}

%%%%%%%%% PAPER ID  - PLEASE UPDATE
%\def\cvprPaperID{6992} % *** Enter the CVPR Paper ID here
%\def\confName{CVPR}
%\def\confYear{2023}

\begin{document}

%\pagestyle{plain}
%%%%%%%%% TITLE - PLEASE UPDATE
\title{Power Bundle Adjustment for Large-Scale 3D Reconstruction}

%\author[1,2]{Simon Weber}
%\author[1,2]{Nikolaus Demmel}
%\author[1,2]{Tin Chon Chan}
%\author[1,2,3]{Daniel Cremers}
%\affil[1]{Technical University of Munich}
%\affil[2]{Munich Center for Machine Learning}
%\affil[3]{University of Oxford}

\author{Simon Weber\textsuperscript{1,2} \\ %\thanks{Technical University of Munich} \thanks{Munich Center for Machine Learning} \\
{\tt\small sim.weber@tum.de}
\and Nikolaus Demmel\textsuperscript{1,2} \\ %\footnotemark[1] \footnotemark[2] \\ 
{\tt\small nikolaus.demmel@tum.de}
\and Tin Chon Chan\textsuperscript{1,2} \\ %\footnotemark[1] \footnotemark[2] \\
{\tt\small tin-1254@hotmail.com}
\and Daniel Cremers\textsuperscript{1,2,3} \\ %\footnotemark[1] \footnotemark[2] \thanks{University of Oxford} \\
{\tt\small cremers@tum.de}
%{\tt\small sim.weber@tum.de} \\
%\and Nikolaus Demmel \footnotemark[1] \footnotemark[2] 
%{\tt\small demmeln@in.tum.de} \\
%\and Tin Chon Chan \footnotemark[1] \footnotemark[2] 
%{\tt\small tin-1254@hotmail.com} \\
%\and Daniel Cremers \footnotemark[1] \footnotemark[2] \thanks{University of Oxford} 
%{\tt\small cremers@tum.de} \\
%\\
% For a paper whose authors are all at the same institution,
% omit the following lines up until the closing ``}''.
% Additional authors and addresses can be added with ``\and'',
% just like the second author.
% To save space, use either the email address or home page, not both
%\and
%Nikolaus Demmel\\
%Technical University of Munich\\
%{\tt\small demmeln@in.tum.de}
%\and
%Tin Chon Chan\\
%Technical University of Munich\\
%{\tt\small tin-1254@hotmail.com}
%\and
%Daniel Cremers\\
%Technical University of Munich\\
%{\tt\small cremers@tum.de}
}

\maketitle
\footnotetext[1]{Technical University of Munich}
\footnotetext[2]{Munich Center for Machine Learning}
\footnotetext[3]{University of Oxford}
%%%%%%%%% ABSTRACT
\begin{abstract}
We introduce Power Bundle Adjustment as an  expansion type algorithm for solving large-scale bundle adjustment problems. It is based on the power series expansion of the inverse Schur complement and constitutes a new family of solvers that we call inverse expansion methods. We theoretically justify the use of power series and we prove the convergence of our approach. Using the real-world BAL dataset we show that the proposed solver challenges the state-of-the-art iterative methods and significantly accelerates the solution of the normal equation, even for reaching a very high accuracy. This easy-to-implement solver can also complement a recently presented distributed bundle adjustment framework. We demonstrate that employing the proposed Power Bundle Adjustment as a sub-problem solver significantly improves speed and accuracy of the distributed optimization.
\end{abstract}

%%%%%%%%% BODY TEXT
\section{Introduction}
\vspace*{-0.5mm}

Bundle adjustment (BA) is a classical computer vision problem that forms the core component of many 3D reconstruction and Structure from Motion (SfM) algorithms. It refers to the joint estimation of camera parameters and 3D landmark positions by minimization of a non-linear reprojection error. The recent emergence of large-scale internet photo collections \cite{key-1}  raises the need for BA methods that are scalable with respect to both runtime and memory. And building accurate city-scale maps for applications such as augmented reality or autonomous driving brings current BA approaches to their limits.

\begin{figure}[t!]
\begin{subfigure}{0.55\textwidth}
    \hspace{0.4cm}
    \includegraphics[width=0.75\linewidth, height=4.2cm]{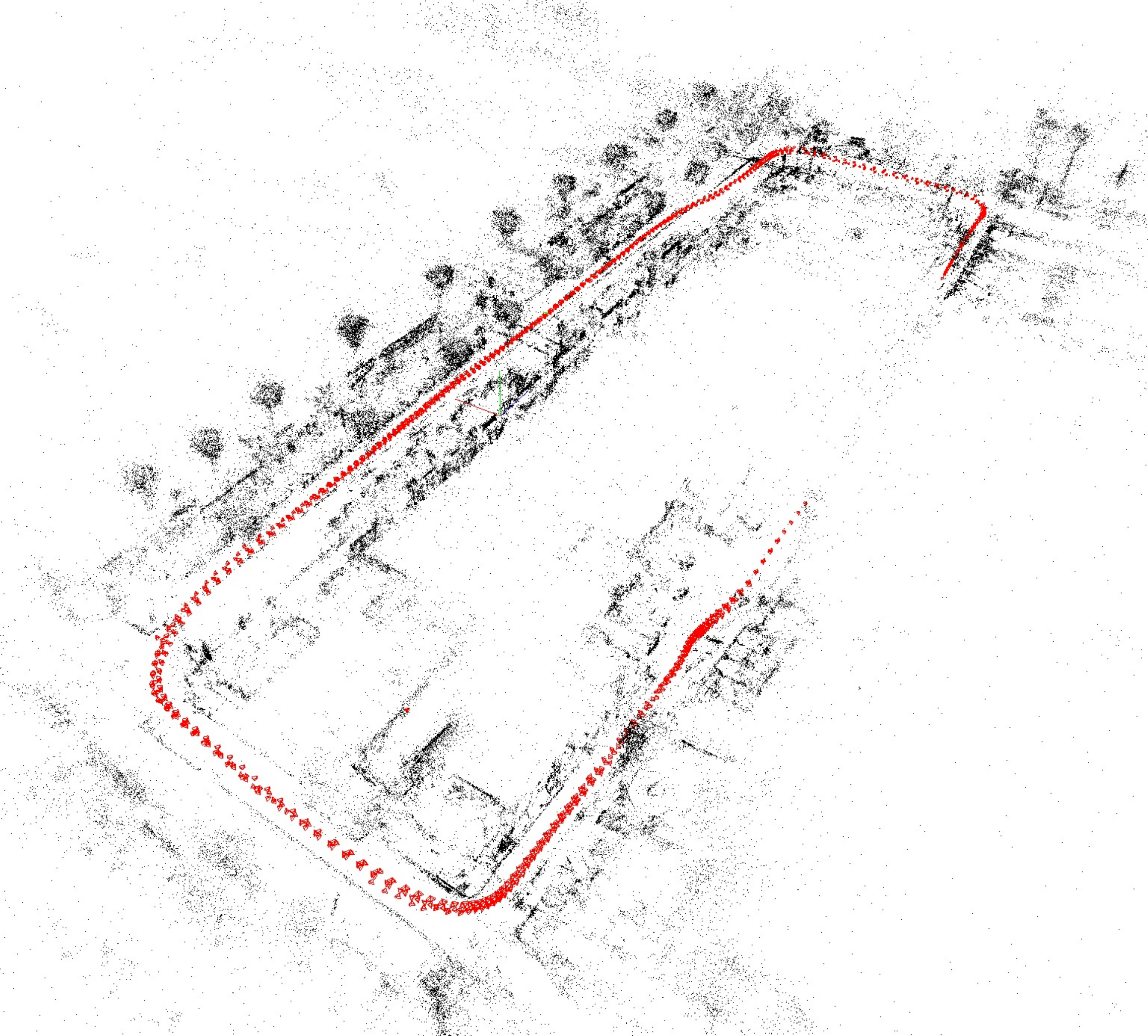} 
\caption{\textit{Ladybug-1197}}
\label{}
\end{subfigure}
\begin{subfigure}{0.45\textwidth}
    \hspace{0.4cm}
    \includegraphics[width=0.9\linewidth, height=4.5cm]{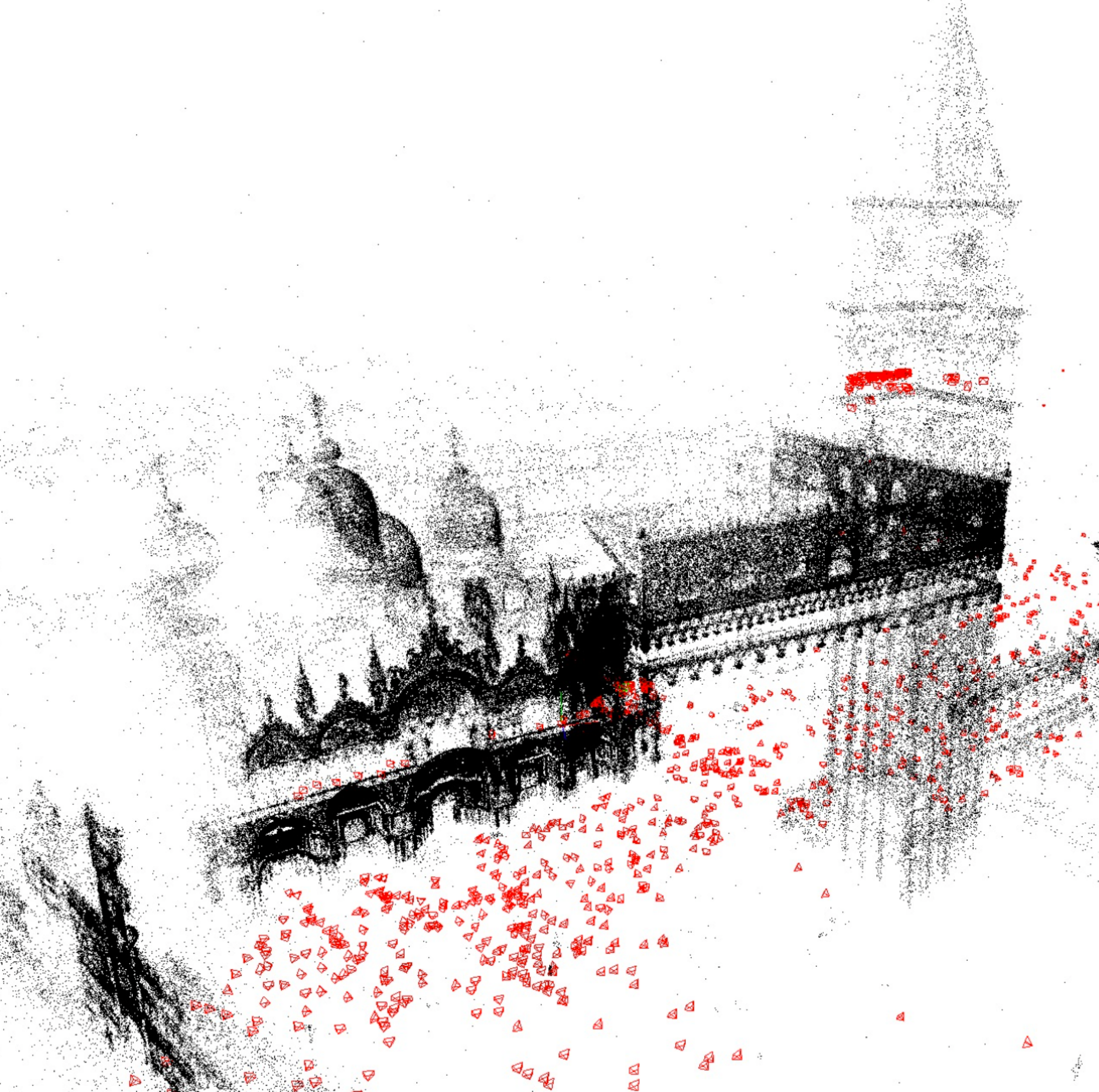}
\caption{\textit{Venice-1102}}
\label{}
\end{subfigure}
\caption{Power Bundle Adjustment (PoBA) is a novel solver for large-scale BA problems that is significantly faster and more memory-efficient than existing solvers. (a) Optimized 3D reconstruction of a \textit{Ladybug} BAL problem with $1197$ poses. \textit{PoBA-$32$} (resp. \textit{PoBA-$64$})  is $41\%$ (resp. $36\%$) faster than the best competing solver to reach a cost tolerance of $1\%$. (b) Optimized 3D reconstruction of a \textit{Venice} BAL problem with $1102$ poses. \textit{PoBA-$32$} (resp. \textit{PoBA-$64$}) is $71\%$ (resp. $69\%$) faster than the best competing solver to reach a cost tolerance of $1\%$. \textit{PoBA} is five times (resp. twice) less memory-consuming than $\sqrt{BA}$ (resp. Ceres).}
\label{fig:pangolin}
\end{figure}

As the solution of the normal equation is the most time consuming step of BA, the Schur complement trick is usually employed to form the reduced camera system (RCS). This linear system involves only the pose parameters and is significantly smaller. Its size can be reduced even more by using a QR factorization, deriving only a matrix square root of the RCS, and then solving an algebraically equivalent problem \cite{demmel2021rootba}. Both the RCS and its square root formulation are commonly solved by iterative methods such as the popular preconditioned conjugate gradients algorithm for large-scale problems or by direct methods such as Cholesky factorization for small-scale problems. 

In the following, we will challenge these two families of solvers by relying on an iterative approximation of the inverse Schur complement. In particular, our contributions are as follows:

\begin{itemize}
    \item[$\bullet$] We introduce Power Bundle Adjustment (\textit{PoBA}) for efficient large-scale BA. This new family of techniques that we call \textit{inverse expansion methods} challenges the state-of-the-art methods which are built on iterative and direct solvers.
    \item[$\bullet$] We link the bundle adjustment problem to the theory of power series and we provide theoretical proofs that justify this expansion and establish the convergence of our solver.
    \item[$\bullet$] We perform extensive evaluation of the proposed approach on the BAL dataset and compare to several state-of-the-art solvers. We highlight the benefits of \textit{PoBA} in terms of speed, accuracy, and memory-consumption. \Cref{fig:pangolin} shows reconstructions for two out of the 97 evaluated BAL problems.
    \item[$\bullet$] We incorporate our solver into a recently proposed distributed BA framework and show a significant improvement in terms of speed and accuracy.
    \item[$\bullet$] We release our solver as open source to facilitate further research: \url{https://github.com/simonwebertum/poba}
\end{itemize}

\section{Related Work}
Since we propose a new way to solve large-scale bundle adjustment problems, we will review works on bundle adjustment and on traditional solving methods, that is, direct and iterative methods. We also provide some background on power series. For a general introduction to series expansion we refer the reader to \cite{key-6}.

%\subsection*{...}
\subsection*{Scalable bundle adjustment.}
A detailed survey of bundle adjustment can be found in \cite{key-2}. The Schur complement \cite{key-12} is the prevalent way to exploit the sparsity of the BA Problem. The choice of resolution method is typically governed by the size of the normal equation: With increasing size, direct methods such as sparse and dense Cholesky factorization \cite{key-13} are outperformed by iterative methods such as inexact Newton algorithms. Large-scale bundle adjustment problems with tens of thousands of images are typically solved by the conjugate gradient method \cite{key-1,key-8,byrod2010conjugate}. Some variants have been designed, for instance the search-space can be enlarged \cite{key-3} or a visibility-based preconditioner can be used \cite{key-9}. A recent line of works on square root bundle adjustment proposes to replace the Schur complement for eliminating landmarks with nullspace projection \cite{demmel2021rootba,demmel2021rootvo}. It leads to significant performance improvements and to one of the most performant solver for the bundle adjustment problem in term of speed and accuracy. Nevertheless these methods still rely on traditional solvers for the reduced camera system, i.e.~preconditioned conjugate gradient method (PCG) for large-scale \cite{demmel2021rootba} and Cholesky decomposition for small-scale \cite{demmel2021rootvo} problems, besides an important cost in term of memory-consumption. Even with PCG, solving the normal equation remains the bottleneck and finding thousands of unknown parameters requires a large number of inner iterations. Other authors try to improve the runtime of BA with PCG by focusing on efficient parallelization \cite{ren2021megba}. Recently, Stochastic BA \cite{key-4} was introduced to stochastically decompose the reduced camera system into subproblems and solve the smaller normal equation by dense factorization. This leads to a distributed optimization framework with improved speed and scalability. By encapsulating the general power series theory into a linear solver we propose to simultaneously improve the speed, the accuracy and the memory-consumption of these existing methods. %perhaps it is enough to write it only in the next subsection

%We propose to alleviate the shortcomings of these direct and iterative methods with a novel BA approach based on power series expansion.

\subsection*{Power series solver.} While power series expansion is common to solve differential equations \cite{key-14}, to the best of our knowledge it has never been employed for solving the bundle adjustment problem. A recent work \cite{key-5} links the Schur complement to Neumann polynomial expansion to build a new preconditioner. Although this method presents interesting results for some physics problems such as convection-diffusion or atmospheric equations, it remains unsatisfactory for the bundle adjustment problem (see \Cref{total_time}). In contrast, we propose to directly apply the power series expansion of the inverse Schur complement for solving the BA problem. Our solver therefore falls in the category of expansion methods that -- to our knowledge -- have never been applied to the BA problem. In addition to being an easy-to-implement solver it leverages the special structure of the BA problem to simultaneously improve the trade-off speed-accuracy and the memory-consumption of the existing methods.

\begin{figure*}\label{fig:condition_number_poCG}
\begin{subfigure}[t]{0.33\textwidth}
    \includegraphics[scale=0.13]{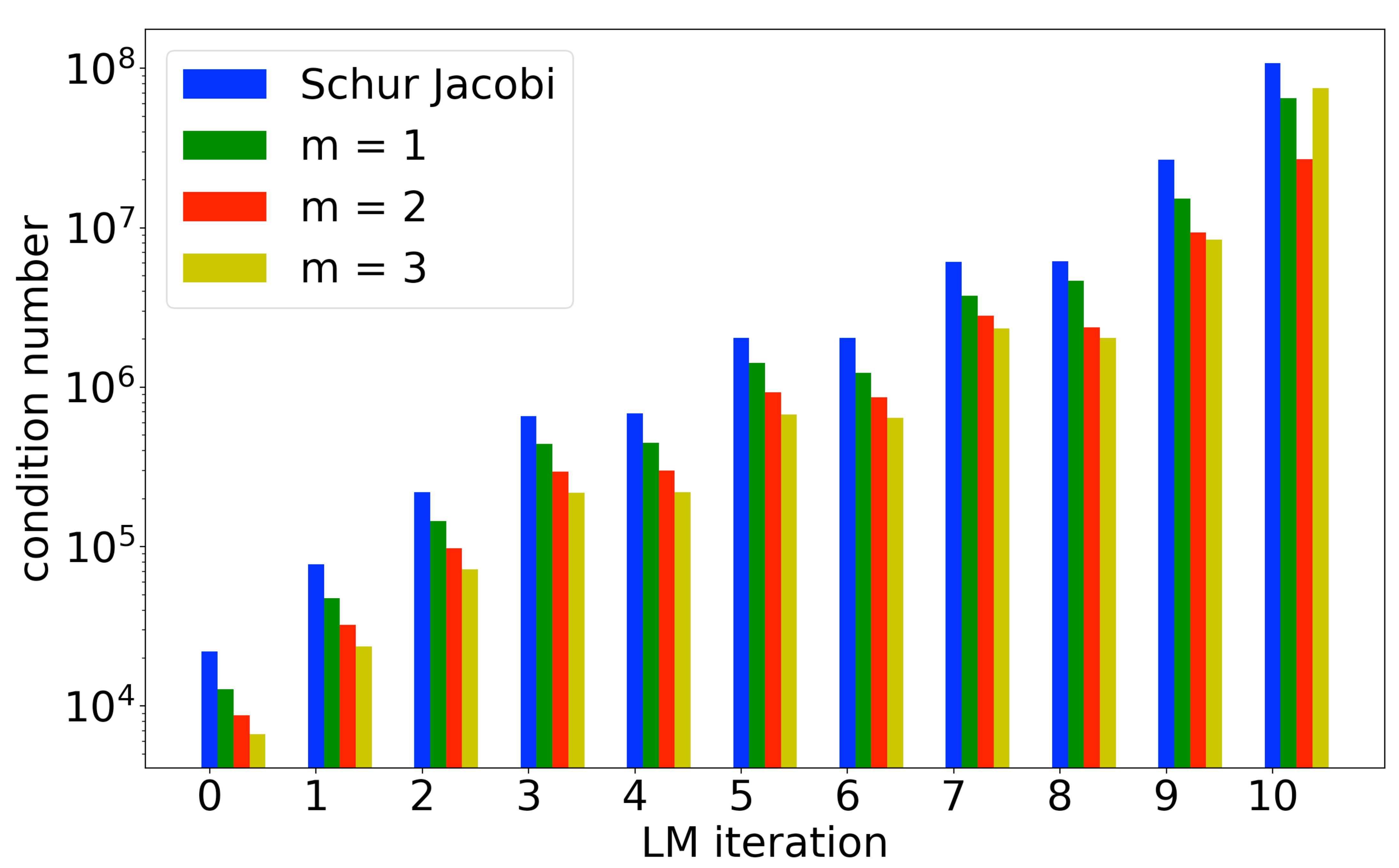} 
    \caption{Condition number}
    \label{fig:condition}
\end{subfigure}
\begin{subfigure}[t]{0.33\textwidth}
    \includegraphics[scale=0.13]{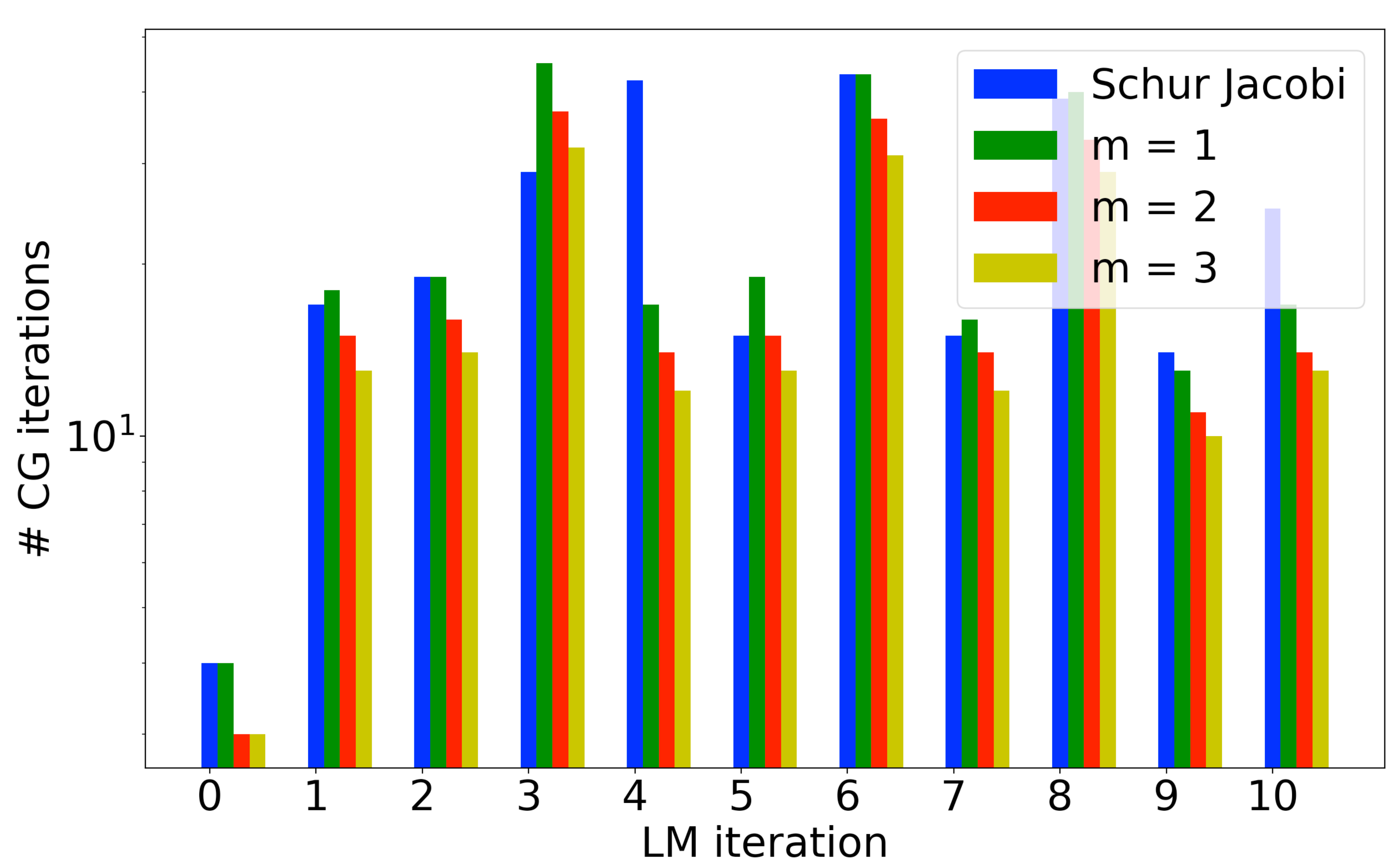} 
    \caption{Number of CG iterations}
    \label{fig:iteration}
\end{subfigure}
\begin{subfigure}[t]{0.33\textwidth}
    \includegraphics[scale=0.13]{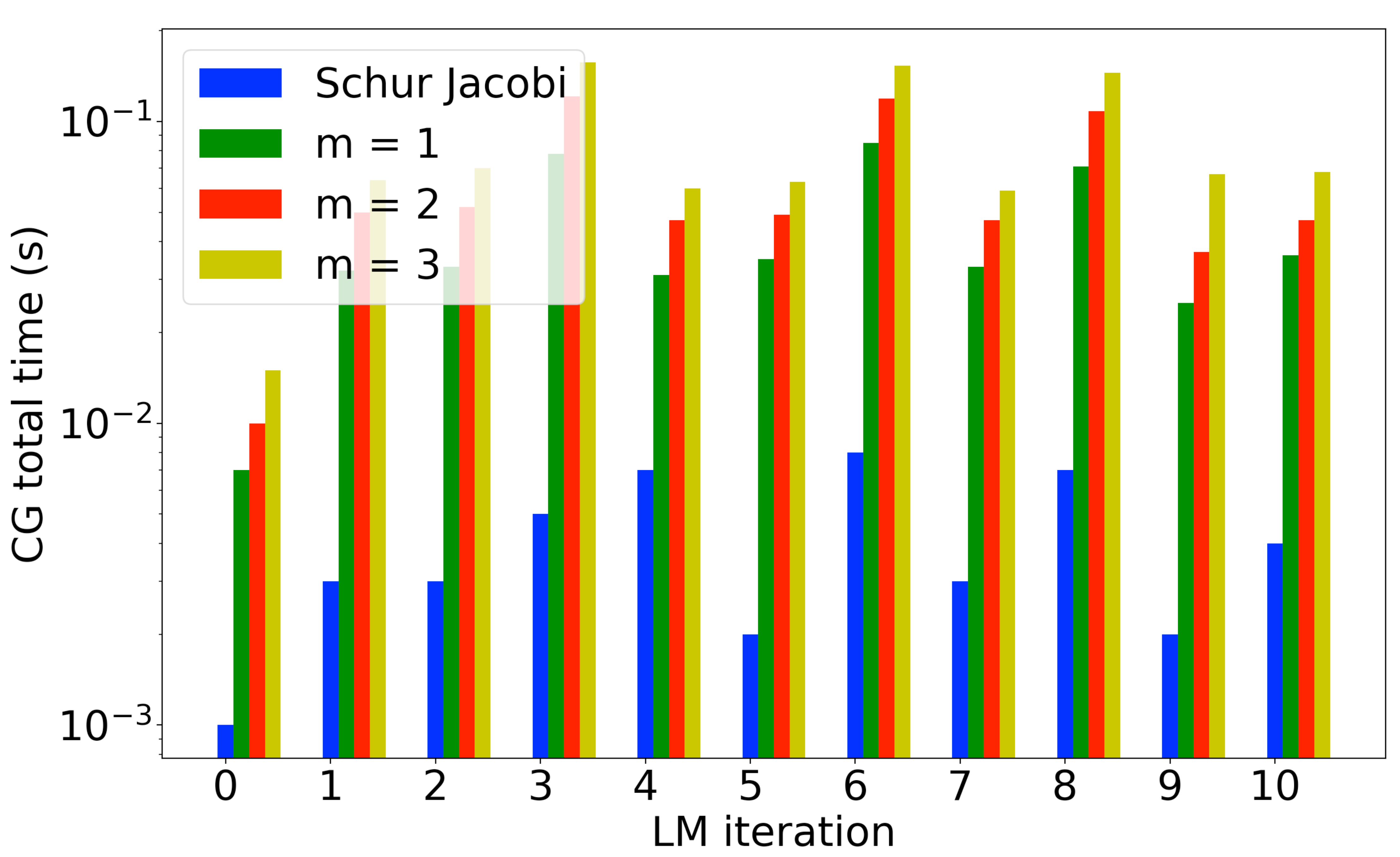}
    \caption{Total runtime of the CG algorithm}
    \label{fig:time}
\end{subfigure}
\caption{Although \cite{key-5} explores the use of power series as a preconditioner for some physics problems it suffers from the special structure of the BA formulation. Given a preconditioner $M^{-1}$ and the Schur complement $S$, the condition number $\kappa (M^{-1}S)$ is linked to the convergence of the conjugate gradients algorithm. (a) illustrates the behaviour of $\kappa$ for the ten first iterations of the LM algorithm for the real problem Ladybug-49 with $49$ poses from BAL dataset and for different orders $m$ of the power series expansion (\ref{S_approx}) used as preconditioner for the CG algorithm. The condition number associated to the popular Schur-Jacobi preconditioner is reduced with this power series preconditioner, that is illustrated by a better convergence of the CG algorithm and then a smaller number of CG iterations (b). Nevertheless each supplementary order $m$ is more costly in terms of runtime as the application of the power series preconditioner involves $4m$ matrix-vector product, whereas the Schur-Jacobi preconditioner can be efficiently stored and applied. (c) It leads to an increase of the overall runtime when solving the normal equation (\ref{normal}).} 
%Leveraging the special structure of the BA problem and designing \ref{Sinv} as a linear solver -- that falls into the expansion methods family, lead to significant speed-up with respect to existing methods based on the CG algorithm.}
\label{total_time}
\end{figure*}

\section{Power Series}
We briefly introduce power series expansion of a matrix. Let $\rho(A)$ denote the spectral radius of a square matrix $A$, i.e. the largest absolute eigenvalue and denote the spectral norm by $\lVert A \rVert = \rho(A)$. The following proposition holds:
\begin{proposition}\label{proposition}
Let $M$ be a $n \times n$ matrix. If the spectral radius of $M$ satisfies $\lVert M \rVert <1$, then 
\begin{equation} 
(I-M)^{-1} = \sum_{i = 0}^{m}M^{i} + R \, ,
\end{equation} 
where the error matrix 
\begin{equation} 
R = \sum_{i = m+1}^{\infty}M^{i} \, ,
\end{equation} 
satisfies 
\begin{equation}\label{bounded}
\lVert R \rVert \leq \frac{\lVert M \rVert ^{m+1}}{1 - \lVert M \rVert}  \, .
\end{equation}
\end{proposition}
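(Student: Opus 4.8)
The plan is to treat this as the classical Neumann series statement and proceed in three stages: first establish that the geometric series $\sum_{i=0}^{\infty}M^{i}$ converges, then identify its limit with $(I-M)^{-1}$, and finally control the tail $R$. The only structural property of the norm I will use is submultiplicativity, i.e.\ $\lVert AB \rVert \le \lVert A \rVert\,\lVert B \rVert$, which holds for the spectral norm and in particular yields $\lVert M^{i} \rVert \le \lVert M \rVert^{i}$ for every $i$.

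First I would prove convergence. Since $\lVert M \rVert < 1$, the scalar series $\sum_{i}\lVert M \rVert^{i}$ converges, so by $\lVert M^{i} \rVert \le \lVert M \rVert^{i}$ the partial sums $S_{N} = \sum_{i=0}^{N}M^{i}$ form a Cauchy sequence in the complete space of $n \times n$ matrices; hence the series has a well-defined limit $S = \sum_{i=0}^{\infty}M^{i}$. In passing this also shows $\lVert M^{N+1} \rVert \to 0$, i.e.\ $M^{N+1} \to 0$, a fact I will reuse.

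Next I would identify $S$ with the inverse. The key ingredient is the telescoping relation
\begin{equation*}
(I-M)\,S_{N} = (I-M)\sum_{i=0}^{N}M^{i} = I - M^{N+1}\, ,
\end{equation*}
verified by expanding the product. Letting $N \to \infty$ and using $M^{N+1} \to 0$ together with continuity of matrix multiplication gives $(I-M)\,S = I$, and symmetrically $S\,(I-M) = I$, so $I-M$ is invertible with $(I-M)^{-1} = S$. Splitting $S = \sum_{i=0}^{m}M^{i} + \sum_{i=m+1}^{\infty}M^{i}$ then yields exactly the claimed decomposition with $R = \sum_{i=m+1}^{\infty}M^{i}$. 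The remainder bound follows by applying the triangle inequality term by term and then the geometric sum,
\begin{equation*}
\lVert R \rVert \le \sum_{i=m+1}^{\infty}\lVert M^{i} \rVert \le \sum_{i=m+1}^{\infty}\lVert M \rVert^{i} = \frac{\lVert M \rVert^{m+1}}{1-\lVert M \rVert}\, ,
\end{equation*}
the closed form being legitimate precisely because $\lVert M \rVert < 1$.

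I expect the only delicate point to be the consistency of the norm: the paper sets $\lVert \cdot \rVert = \rho(\cdot)$, yet the spectral radius coincides with the spectral (operator-$2$) norm only for normal matrices, and $\rho$ alone is not submultiplicative. I would therefore either restrict attention to the symmetric setting relevant to the reduced camera system of bundle adjustment, where $\rho(M) = \lVert M \rVert$ genuinely holds, or — more safely — run the entire argument with the submultiplicative spectral norm and merely invoke $\rho(M) \le \lVert M \rVert$ at the convergence step, which is all that is needed for the inequalities above to go through.
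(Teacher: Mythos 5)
Your proof is correct and follows essentially the same route as the paper's: both rest on the telescoping identity $(I-M)\sum_{i=0}^{N}M^{i} = I - M^{N+1}$ together with $\lVert M^{i} \rVert \le \lVert M \rVert^{i} \to 0$ to obtain the Neumann series, and both bound the tail by the geometric sum $\lVert M \rVert^{m+1}/(1-\lVert M \rVert)$ (the paper factors $R = M^{m+1}(I-M)^{-1}$ instead of bounding term by term, a cosmetic difference; your explicit Cauchy-sequence argument for convergence is a small extra courtesy). Your closing caveat is well taken: the paper's statement defines $\lVert \cdot \rVert = \rho(\cdot)$ yet its proof invokes submultiplicativity, which the spectral radius alone does not satisfy, so running the argument in the operator $2$-norm and using $\rho(M) \le \lVert M \rVert$ where needed, as you propose, is the correct repair.
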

A proof is provided in Appendix and an illustration with real problems is given in \Cref{fig:boundary}. %Section \ref{implementation}.
\section{Power Bundle Adjustment}
We consider a general form of bundle adjustment with $n_{p}$ poses and $n_{l}$ landmarks. Let $x = (x_{p}, x_{l})$ be the state vector containing all the optimization variables, where the vector $x_{p}$ of length $d_{p}n_{p}$ is associated to the extrinsic and (possibly) intrinsic camera parameters for all poses and the vector $x_{l}$ of length $3n_{l}$ is associated to the 3D coordinates of all landmarks. In case only the extrinsic parameters are unknown then $d_{p} = 6$ for rotation and translation of each camera. For the evaluated BAL problems we additionally estimate intrinsic parameters and $d_{p} = 9$. The objective is to minimize the total bundle adjustment energy 
\begin{equation} 
F(x) = \frac{1}{2} \lVert r(x) \rVert_2^{2} = \frac{1}{2} \sum_{i}\lVert r_{i}(x) \rVert_2^{2} \, , 
\end{equation} where the vector $r(x) = [r_{1}(x)^{\top},...,r_{k}(x)^{\top}]^{\top}$ comprises all residuals capturing the discrepancy between model and observation.
\subsection{Least Squares Problem}
This nonlinear least squares problem is commonly solved with the Levenberg-Marquardt (LM) algorithm, which is based on the first-order Taylor approximation of $r(x)$ around the current state estimate $x^{0}=(x_{p}^{0}, x_{l}^{0})$. By adding a regularization term to improve convergence the minimization turns into 
\begin{equation} 
\begin{split}
\min_{\Delta x_{p}, \Delta x_{l}} \frac{1}{2} \Big(\Big\lVert r^{0} + \begin{pmatrix} J_{p} & J_{l} \end{pmatrix} \begin{pmatrix} \Delta x_{p} \\ \Delta x_{l} \end{pmatrix} \Big\rVert_{2} ^{2} \\ + \lambda \Big\lVert \begin{pmatrix} D_{p} & D_{l} \end{pmatrix} \begin{pmatrix} \Delta x_{p} \\ \Delta x_{l} \end{pmatrix} \Big\rVert_2^{2}\Big) \, ,
\end{split}
\end{equation}
with $r^{0} = r(x^{0})$, $J_{p}=\frac{\partial r}{\partial x_{p}}|_{x^{0}}$, $J_{l}=\frac{\partial r}{\partial x_{l}}|_{x^{0}} $, $\lambda$ a damping coefficient, and $D_{p}$ and $D_{c}$ diagonal damping matrices for pose and landmark variables. This damped problem leads to the corresponding normal equation
\begin{equation}\label{normal}
H \begin{pmatrix} \Delta x_{p} \\ \Delta x_{l} \end{pmatrix} = - \begin{pmatrix} b_{p} \\ b_{l} \end{pmatrix} \, ,
\end{equation} 
where 

\begin{align}
H = \begin{pmatrix} 
U_{\lambda} & W \\ W^{\top} & V_{\lambda}\end{pmatrix}, \\
U_{\lambda} = J_{p}^{\top}J_{p} + \lambda D_{p}^{\top}D_{p}, \\ V_{\lambda} = J_{l}^{\top}J_{l} + \lambda D_{l}^{\top}D_{l}, \\  W = J_{p}^{\top}J_{l}, \\ b_{p} = J_{p}^{\top}r^{0},\; b_{l} = J_{l}^{\top}r^{0} \, .
\end{align}
$U_{\lambda}$, $V_{\lambda}$ and $H$ are symmetric positive-definite \cite{key-2}. 

\subsection{Schur Complement}
As inverting the system matrix $H$ of size $(d_{p}n_{p} + 3n_{l})^{2}$ directly tends to be excessively costly for large-scale problems it is common to reduce it by using the Schur complement trick. The idea is to form the reduced camera system 
\begin{equation}\label{Sx}
S \Delta x_{p} = - \tilde{b} \, ,
\end{equation} 
with 
\begin{align} 
S = U_{\lambda}-WV_{\lambda}^{-1}W^{\top}, \label{Schur} \\
\tilde{b} = b_{p} - WV_{\lambda}^{-1}b_{l} \, .
\end{align}
(\ref{Sx}) is then solved for $\Delta x_{p}$. The optimal $\Delta x_{l}$ is obtained by back-substitution:
\begin{equation}\label{back_substitution}
\Delta x_{l} = -V_{\lambda}^{-1}(-b_{l}+W^{\top}\Delta x_{p}) \, .
\end{equation}

\subsection{Power Bundle Adjustment}
Factorizing (\ref{Schur}) with the block-matrix $U_{\lambda}$ 
\begin{equation}
    S = U_{\lambda}(I - U_{\lambda}^{-1}WV_{\lambda}^{-1}W^{\top})
\end{equation}
leads to formulate the inverse Schur complement as
\begin{equation}\label{inverse} 
S^{-1}=(I - U_{\lambda}^{-1}WV_{\lambda}^{-1}W^{\top})^{-1}U_{\lambda}^{-1} \, .
\end{equation}
In order to expand (\ref{inverse}) into a power series as detailed in Proposition \ref{proposition}, we require to bound the spectral radius of $U_{\lambda}^{-1}WV_{\lambda}^{-1}W^{\top}$ by $1$. 

By leveraging the special structure of the BA problem we prove an even stronger result:
\begin{lemma} 
Let $\mu$ be an eigenvalue of $U_{\lambda}^{-1}WV_{\lambda}^{-1}W^{\top}$. Then 
\begin{equation}
    \mu \in [0,1[ \, .
\end{equation}
\end{lemma}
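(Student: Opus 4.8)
The plan is to reduce the question about the possibly non-symmetric matrix $A := U_{\lambda}^{-1}WV_{\lambda}^{-1}W^{\top}$ to a statement about a symmetric matrix with the same spectrum. Since $U_{\lambda}$ is symmetric positive-definite, it admits a symmetric positive-definite square root $U_{\lambda}^{1/2}$, and conjugating $A$ by $U_{\lambda}^{1/2}$ yields
\begin{equation}
U_{\lambda}^{1/2} A U_{\lambda}^{-1/2} = U_{\lambda}^{-1/2} W V_{\lambda}^{-1} W^{\top} U_{\lambda}^{-1/2} =: B \, .
\end{equation}
Because $A$ and $B$ are similar they share the same eigenvalues, so it suffices to show that every eigenvalue $\mu$ of the symmetric matrix $B$ lies in $[0,1[$. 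In particular, this similarity also guarantees that the eigenvalues are real, which is already needed for the interval statement to make sense.

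First I would establish $\mu \geq 0$. Writing $V_{\lambda}^{-1} = V_{\lambda}^{-1/2}V_{\lambda}^{-1/2}$ and setting $C := U_{\lambda}^{-1/2} W V_{\lambda}^{-1/2}$, one obtains $B = C C^{\top}$, which is positive semi-definite; hence all its eigenvalues, and therefore all $\mu$, are nonnegative.

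Next I would establish the strict upper bound $\mu < 1$, which is equivalent to $B \prec I$. Here I would exploit the identity
\begin{equation}
I - B = U_{\lambda}^{-1/2}\left(U_{\lambda} - W V_{\lambda}^{-1} W^{\top}\right) U_{\lambda}^{-1/2} = U_{\lambda}^{-1/2} S\, U_{\lambda}^{-1/2} \, ,
\end{equation}
recognizing that the bracketed term is exactly the Schur complement $S$ from (\ref{Schur}). Since $H$ is symmetric positive-definite, its Schur complement $S$ with respect to the positive-definite block $V_{\lambda}$ is again positive-definite, and congruence by the invertible matrix $U_{\lambda}^{-1/2}$ preserves positive-definiteness. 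Therefore $I - B \succ 0$, i.e.~$B \prec I$, which forces every eigenvalue of $B$ to be strictly less than $1$.

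Combining the two bounds gives $\mu \in [0,1[$, as claimed. The only genuinely delicate point is the strictness of the upper bound: it relies on $S$ being strictly positive-definite rather than merely positive semi-definite, which in turn rests on $H$ being strictly positive-definite -- a fact already recorded in the text. The nonnegativity step, by contrast, is routine once the $C C^{\top}$ factorization is in place.
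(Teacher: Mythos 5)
Your proof is correct and follows essentially the same route as the paper: both arguments symmetrize $U_{\lambda}^{-1}WV_{\lambda}^{-1}W^{\top}$ via conjugation with $U_{\lambda}^{\pm 1/2}$, obtain $\mu \geq 0$ from positive semi-definiteness of the symmetrized product, and obtain $\mu < 1$ from positive definiteness of the Schur complement $S$ through the identity $U_{\lambda}^{-1}WV_{\lambda}^{-1}W^{\top} = I - U_{\lambda}^{-1}S$. The only differences are cosmetic: you phrase the upper bound via congruence and the Loewner order ($I - B \succ 0$) where the paper uses similarity of $U_{\lambda}^{-1}S$ with $U_{\lambda}^{-1/2}SU_{\lambda}^{-1/2}$, and you spell out why $S \succ 0$ (as the Schur complement of the SPD matrix $H$), which the paper simply asserts.
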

\begin{proof}
On the one hand $U_{\lambda}^{-\frac{1}{2}}WV_{\lambda}^{-1}W^{\top}U_{\lambda}^{-\frac{1}{2}}$ is symmetric positive semi-definite, as $U_{\lambda}$ and $V_{\lambda}$ are symmetric positive definite. Then its eigenvalues are greater than $0$. As $U_{\lambda}^{-\frac{1}{2}}WV_{\lambda}^{-1}W^{\top}U_{\lambda}^{-\frac{1}{2}}$ and $U_{\lambda}^{-1}WV_{\lambda}^{-1}W^{\top}$ are similar, 
\begin{equation}
    \mu \geq 0 \, .
\end{equation} On the other hand $U_{\lambda}^{-\frac{1}{2}}SU_{\lambda}^{-\frac{1}{2}}$ is symmetric positive definite as $S$ and $U_{\lambda}$ are. It follows that the eigenvalues of $U_{\lambda}^{-1}S$ are all strictly positive due to its similarity with $U_{\lambda}^{-\frac{1}{2}}SU_{\lambda}^{-\frac{1}{2}}$. As 
\begin{equation}
    U_{\lambda}^{-1}WV_{\lambda}^{-1}W^{\top} = I - U_{\lambda}^{-1}S \, ,
\end{equation}
it follows that 
\begin{equation}
    \mu < 1 \, ,
\end{equation} 
that concludes the proof. 
\end{proof}

Let be 
\begin{equation}\label{S_approx}
    \tilde{S}_{-1}(m) = \sum_{i=0}^{m}(U_{\lambda}^{-1}WV_{\lambda}^{-1}W^{\top})^{i}U_{\lambda}^{-1} \, ,
\end{equation}
and 
\begin{equation}\label{x_approx}
    x(m) = - \tilde{S}_{-1}(m) \tilde{b} \, ,
\end{equation}
for $m \geq 0$. The following proposition confirms that the approximation indeed converges with increasing order of $m$:

\begin{proposition}\label{prop_convergence}
$\lVert x(m) - \Delta x_{p} \rVert _{2} \underset{m \to +\infty}{\longrightarrow} 0 \, .$ 
\end{proposition}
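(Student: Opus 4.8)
The plan is to reduce the statement to the decay of the Neumann tail already quantified in \Cref{proposition}. First I would write the error in closed form. The exact step $\Delta x_{p}$ solves the reduced camera system (\ref{Sx}), so $\Delta x_{p} = -S^{-1}\tilde{b}$, while by definition (\ref{x_approx}) we have $x(m) = -\tilde{S}_{-1}(m)\tilde{b}$. Setting $M := U_{\lambda}^{-1}WV_{\lambda}^{-1}W^{\top}$ and using the factored inverse (\ref{inverse}), namely $S^{-1} = (I-M)^{-1}U_{\lambda}^{-1}$, together with the definition (\ref{S_approx}) of $\tilde{S}_{-1}(m)$, the two expressions share the common right factor $U_{\lambda}^{-1}$ and I obtain
\begin{equation}
x(m) - \Delta x_{p} = \Big( (I-M)^{-1} - \sum_{i=0}^{m} M^{i}\Big) U_{\lambda}^{-1}\tilde{b} = R\, U_{\lambda}^{-1}\tilde{b} \, ,
\end{equation}
where $R = \sum_{i=m+1}^{\infty} M^{i}$ is exactly the error matrix of \Cref{proposition}. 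Thus it suffices to show $\lVert R\, U_{\lambda}^{-1}\tilde{b}\rVert_{2} \to 0$, and since $U_{\lambda}^{-1}\tilde{b}$ is a fixed vector it is enough that $R \to 0$ in operator norm as $m \to \infty$.

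Next I would certify that the hypothesis of \Cref{proposition} is met. The preceding Lemma shows that every eigenvalue $\mu$ of $M$ lies in $[0,1[$, i.e. $\rho(M) < 1$. Consequently \Cref{proposition} applies, the Neumann expansion $(I-M)^{-1} = \sum_{i=0}^{m}M^{i} + R$ is valid, and the bound (\ref{bounded}) gives $\lVert R \rVert \leq \rho(M)^{m+1}/(1-\rho(M))$, which tends to $0$ because $\rho(M) < 1$. Transporting this through the submultiplicative estimate $\lVert R\, U_{\lambda}^{-1}\tilde{b}\rVert_{2} \leq \lVert R\rVert\,\lVert U_{\lambda}^{-1}\tilde{b}\rVert_{2}$ then closes the argument.

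The main obstacle I anticipate is a norm mismatch hidden in this last step: the Lemma and \Cref{proposition} are phrased in terms of the spectral radius $\rho(\cdot)$, whereas the claim is about the Euclidean norm $\lVert\cdot\rVert_{2}$, and $M$ is not symmetric, so $\rho(M) < 1$ does not by itself bound the operator $2$-norm $\lVert M\rVert_{2}$ (the convenient geometric bound $\lVert M^{i}\rVert_{2}\leq\lVert M\rVert_{2}^{i}$ could even be vacuous). I would handle this by passing to the symmetric, similar matrix $\tilde{M} := U_{\lambda}^{-\frac{1}{2}}WV_{\lambda}^{-1}W^{\top}U_{\lambda}^{-\frac{1}{2}}$ already used in the Lemma's proof: for the symmetric positive semi-definite $\tilde{M}$ one genuinely has $\lVert\tilde{M}\rVert_{2} = \rho(\tilde{M}) = \rho(M) < 1$, so the tail $\tilde{R} = \sum_{i\geq m+1}\tilde{M}^{i}$ decays geometrically in the $2$-norm, and the similarity $R = U_{\lambda}^{\frac{1}{2}}\tilde{R}\,U_{\lambda}^{-\frac{1}{2}}$ carries this decay back to $R$ up to the fixed constant $\lVert U_{\lambda}^{\frac{1}{2}}\rVert_{2}\lVert U_{\lambda}^{-\frac{1}{2}}\rVert_{2}$. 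Alternatively, I could appeal to the coordinate-free fact that $\rho(M) < 1$ already forces $M^{i}\to 0$ and hence Neumann convergence in every norm, which is all that is required since I only need $R\to 0$ rather than the explicit rate.
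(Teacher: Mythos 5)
Your proof is correct and follows essentially the same route as the paper's: write $x(m)-\Delta x_{p}$ as the Neumann tail applied to $\tilde{b}$ (your $R\,U_{\lambda}^{-1}\tilde{b}$ is exactly the paper's $R_{m}\tilde{b}$), invoke the Lemma to get spectral radius strictly below $1$, and conclude with the tail bound of Proposition~\ref{proposition}. The one place you go beyond the paper is the spectral-radius-versus-operator-norm point: the paper defines $\lVert A\rVert=\rho(A)$ and then uses submultiplicativity and consistency with $\lVert\cdot\rVert_{2}$ as if this were a genuine matrix norm, which for the non-symmetric $M=U_{\lambda}^{-1}WV_{\lambda}^{-1}W^{\top}$ requires precisely the justification you supply (the similarity transport through $U_{\lambda}^{\frac{1}{2}}$ --- noting in fact that $M^{i}U_{\lambda}^{-1}$, and hence $R_{m}$, is symmetric --- or the coordinate-free fact that $\rho(M)<1$ forces $M^{i}\to 0$ in every norm), so your added care closes a gap the paper's terse presentation leaves implicit.
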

\begin{proof}
We denote $P = U_{\lambda}^{-1}WV_{\lambda}^{-1}W^{\top}$. Due to Lemma 1
\begin{equation}\label{P_bound}
    \lVert P \rVert < 1 \, .
\end{equation}
The inverse Schur complement associated to (\ref{normal}) admits a power series expansion: 
\begin{equation}
    S^{-1} = \tilde{S}_{-1}(m) + R_{m} \, ,
\end{equation}
where
\begin{equation}
    R_{m} = \sum_{i = m+1}^{\infty}{P^{i}}U_{\lambda}^{-1}  
\end{equation}
satisfies 
\begin{equation}\label{R_S}
    \lVert R_{m} \lVert \leq \frac{\lVert P \rVert ^{m+1}}{1 - \lVert P \rVert} \lVert U_{\lambda}^{-1} \rVert \, .
\end{equation}
It follows that:
\begin{equation}
      x(m) - \Delta x_{p} = R_{m} \tilde{b} \, .
\end{equation}
The consistency of the spectral norm with respect to the vector norm implies: 
\begin{equation}\label{R_inequality}
    \lVert R_{m} \tilde{b} \rVert _{2} \leq \lVert R_{m} \rVert \lVert \tilde{b} \rVert _{2} \, .
\end{equation}
From (\ref{P_bound}), (\ref{R_S}) and (\ref{R_inequality}) we conclude the proof: 
\begin{equation}
    \lVert R_{m} \tilde{b} \rVert _{2} \underset{m \to +\infty}{\longrightarrow} 0 \, ,
\end{equation} 
and then
\begin{equation}
    \lVert x(m) - \Delta x_{p} \rVert _{2} \underset{m \to +\infty}{\longrightarrow} 0 \, .
\end{equation}
\end{proof}

This convergence result proves that 
\begin{itemize}
    \item an approximation of $\Delta x_{p}$ can be directly obtained by applying (\ref{S_approx}) to the right-hand side of (\ref{Sx});
    \item the quality of this approximation depends on the order $m$ and can be as small as desired.
\end{itemize}

\begin{figure*}
\begin{center}
\includegraphics[scale=0.61]{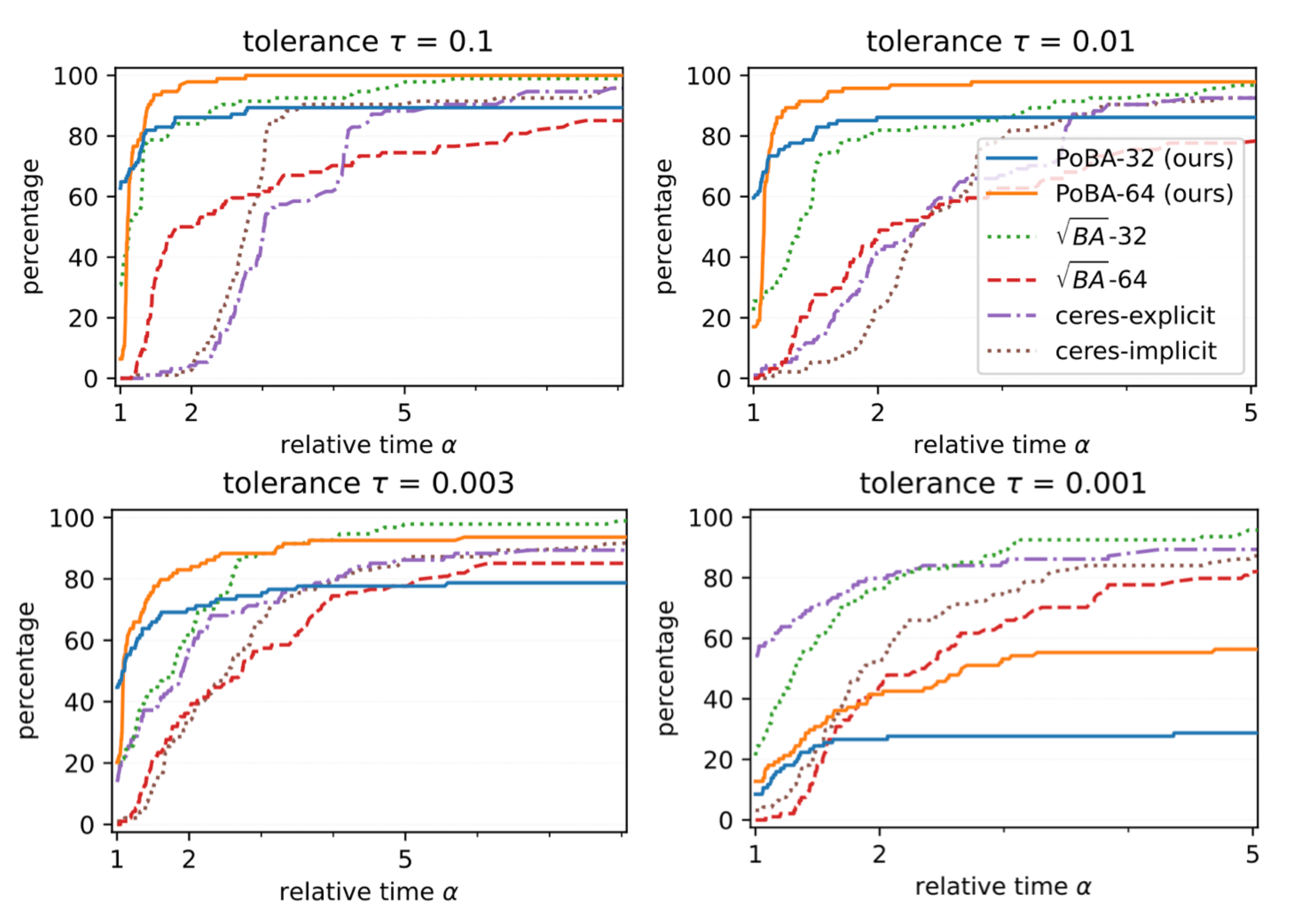}
\caption{Performance profiles for all BAL problems show the percentage of problems solved to a given accuracy tolerance $\tau \in \{0.1, 0.01, 0.003, 0.001\}$ with relative runtime $\alpha$. Our proposed solver \textit{PoBA} using series expansion of the Schur complement significantly outperforms all the competing solvers up to the high accuracy $\tau = 0.003$.}
\label{fig:performance}
\end{center}
\end{figure*}

The power series expansion being iteratively derived, a termination rule is necessary.

By analogy with inexact Newton methods \cite{key-7, nash, nash2} such that the conjugate gradients algorithm we set a stop criterion 
\begin{equation}\label{qcriterion}
    (i + 1) * \lVert (x(i) - x(i-1)) \rVert_{2} / \lVert x(i) \rVert_{2} < \epsilon \, ,
\end{equation}
for a given $\epsilon$. This criterion ensures that the power series expansion stops when the refinement of the pose update by expanding the inverse Schur complement into a supplementary order 
\begin{equation}
    \lVert (x(i) - x(i-1)) \rVert_{2}
\end{equation}
is much smaller than the average refinement when reaching the same order
\begin{equation}
    \frac{\lVert \sum_{j=1}^{i}{(x(j) - x(j-1)) +x(0)} \rVert _{2} }{i+1} = \frac{\lVert x(i) \rVert_{2}}{i+1} \, .
\end{equation}

\begin{figure}[tb]\label{consumption}
\begin{center}
\includegraphics[scale=0.3]{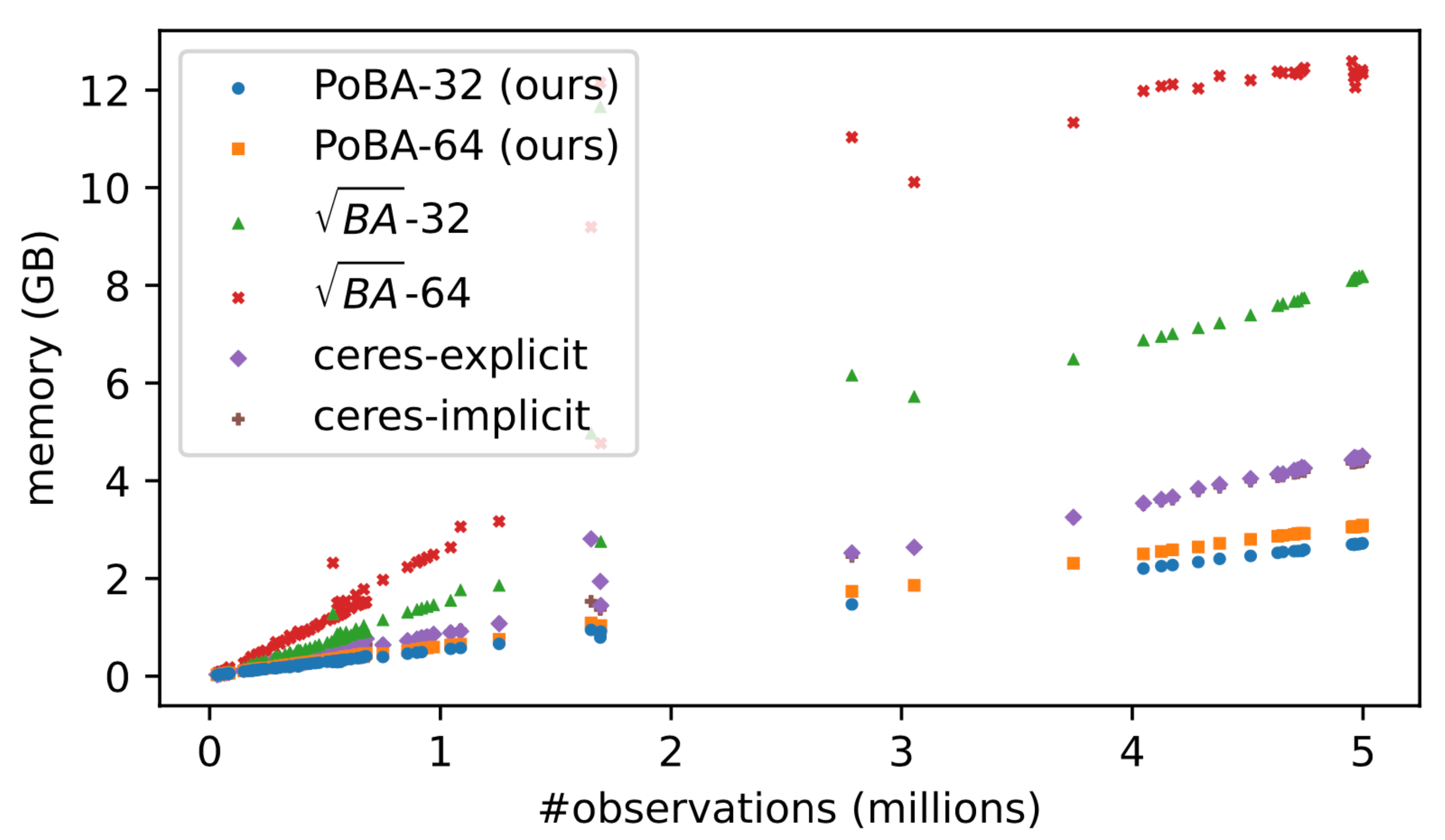}
\caption{Memory consumption for all BAL problems. The proposed \textit{PoBA} solver (orange and blue points) is five times less memory-consuming than $\sqrt{BA}$ solvers.}
\label{fig:consumption}
\end{center}
\end{figure}

\section{Implementation}\label{implementation}
We implement our \textit{PoBA} solver in C++ in single (\textit{PoBA-$32$}) and double (\textit{PoBA-$64$}) floating-point precision, directly on the publicly available implementation\footnote{\url{https://github.com/NikolausDemmel/rootba}} of \cite{demmel2021rootba}. This recent solver presents excellent performance to solve the bundle adjustment by using a QR factorization of the landmark Jacobians. It notably competes the popular Ceres solver. We additionally add a comparison with Ceres' sparse Schur complement solvers, similarly as in \cite{demmel2021rootba}. \textit{Ceres-explicit} and \textit{Ceres-implicit} iteratively solve (\ref{Sx}) with the conjugate gradients algorithm preconditioned by the Schur-Jacobi preconditioner. The first one saves $S$ in memory as a block-sparse matrix, the second one computes $S$ on-the-fly during iterations. $\sqrt{BA}$ and Ceres offer very competitive performance to solve the bundle adjustment problem, that makes them very challenging baselines to compare \textit{PoBA} to. We run experiments on MacOS 11.2 with an Intel Core i5 and 4 cores at 2GHz.

\begin{figure}[tb]

\begin{subfigure}{0.50\textwidth}
\includegraphics[width=1\linewidth, height=5cm]{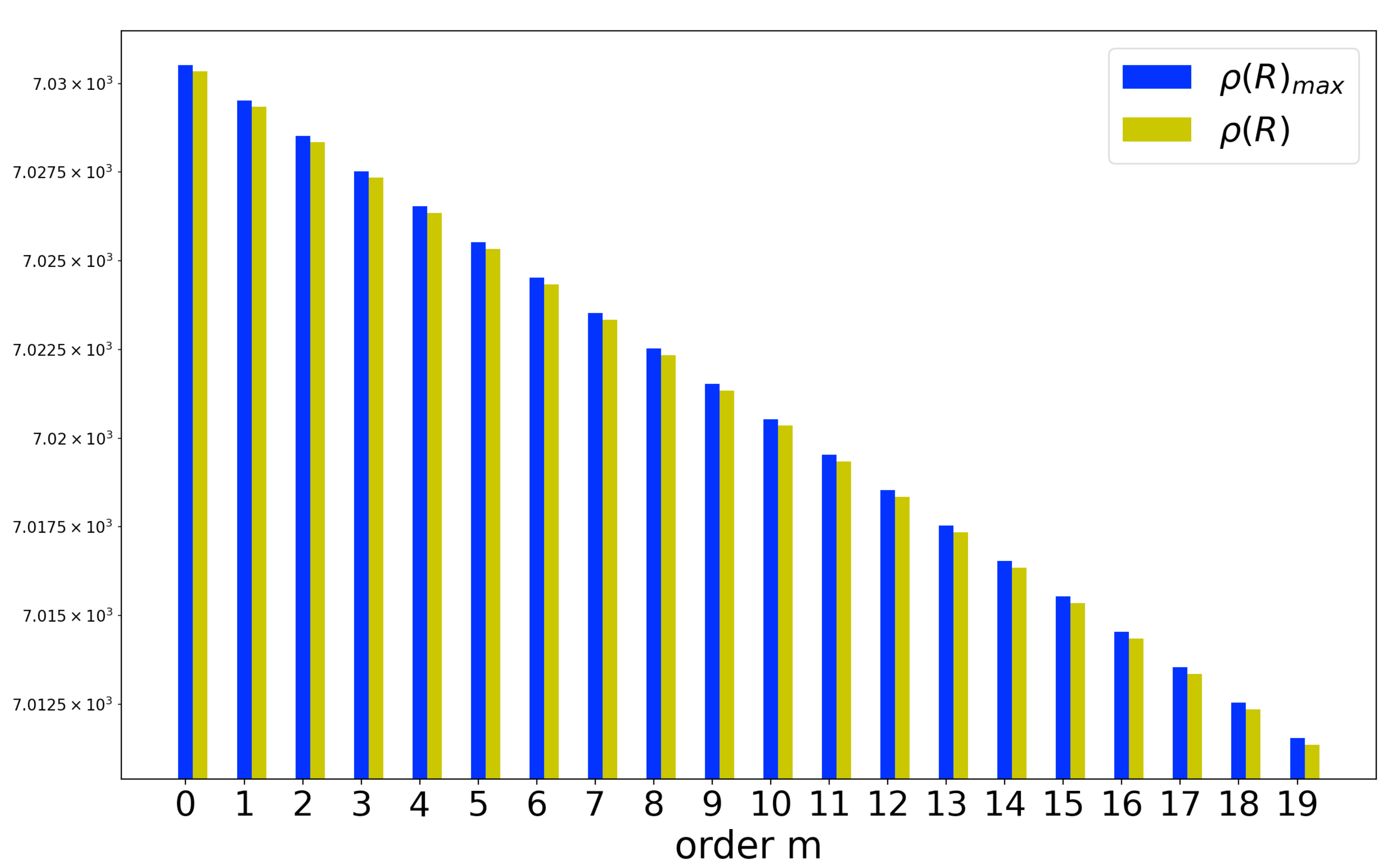} 
\caption{\textit{Ladybug-49}}
\label{fig:subim1}
\end{subfigure}
\begin{subfigure}{0.50\textwidth}
\includegraphics[width=1\linewidth, height=5cm]{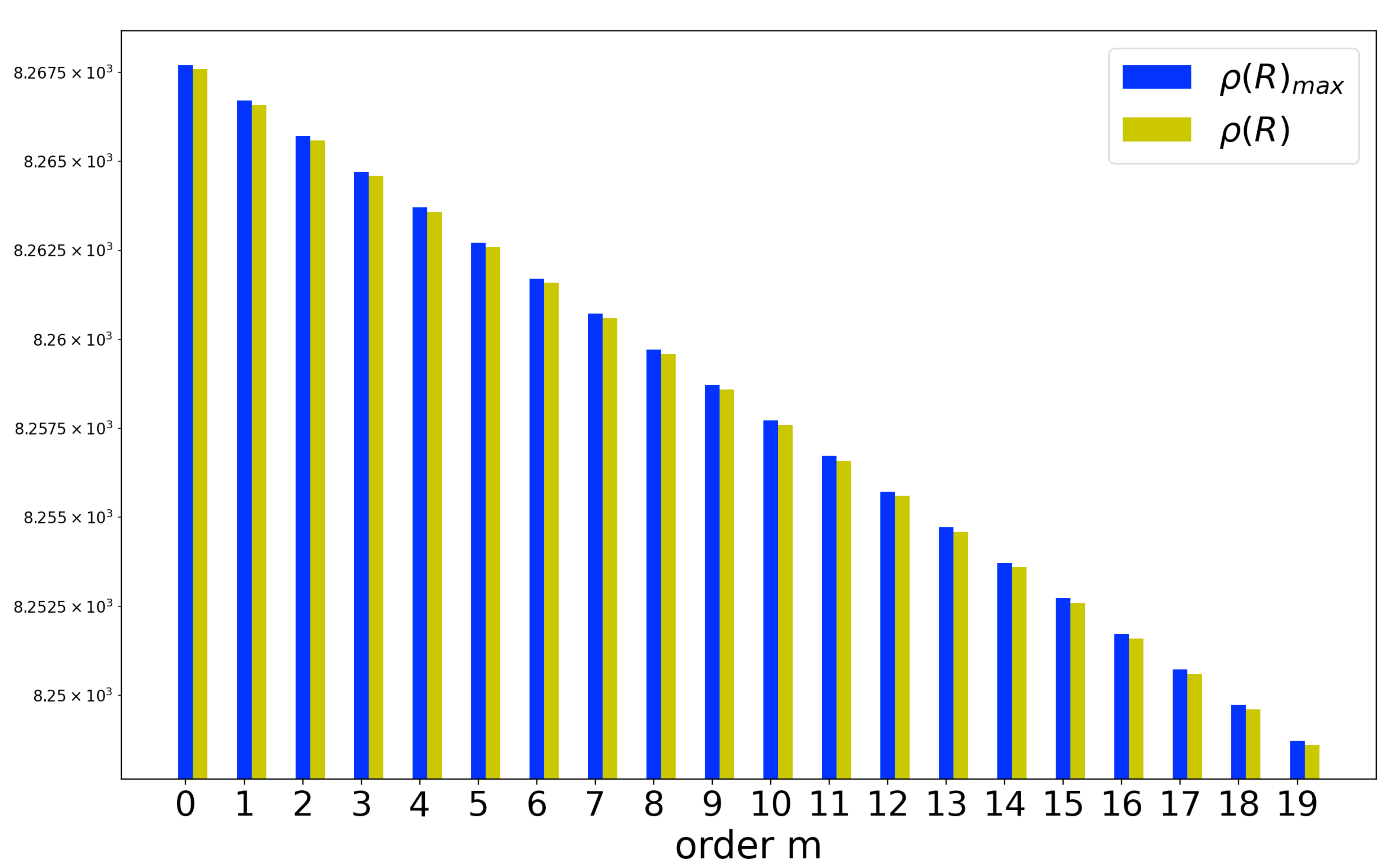}
\caption{\textit{Trafalgar-193}}
\label{fig:subim2}
\end{subfigure}
\caption{Illustration of the inequality (\ref{bounded}) in Proposition 1 for the first LM iteration of two BAL problems: (a) \textit{Ladybug} with $49$ poses and (b) \textit{Trafalgar} with $193$ poses. The spectral norm of the error matrix $R$ is plotted in green for $m < 20$. The right-side of the inequality plotted in blue represents the theoretical upper bound of the spectral norm of the error matrix and depends on the considered $m$ and on the spectral norm of $M = U_{\lambda}^{-1}WV_{\lambda}^{-1}W^{\top}$. With Spectra library \cite{key-18} $\rho(M)$ takes the values (a) $0.999858$ for \textit{L-49} and (b) $0.999879$ for \textit{T-193}. Both values are smaller than $1$ and $\rho (R)$ is always smaller than $\rho (M)^{m+1} / (1 - \rho (M))$, as stated in Lemma 1.} 
\label{fig:boundary}
\end{figure}

\subsubsection*{Efficient storage.}
We leverage the special structure of BA problem and design a memory-efficient storage. We group the Jacobian matrices and residuals by landmarks and store them in separate dense memory blocks. For a landmark with $k$ observations, all pose Jacobian blocks of size $2 \times d_p$ that correspond to the poses where the landmark was observed, are stacked and stored in a memory block of size $2k \times d_p$. Together with the landmark Jacobian block of size $2k \times 3$ and the residuals of length $2k$ that are also associated to the landmark, all information of a single landmark is efficiently stored in a memory block of size $2k \times (d_p+4)$. Furthermore, operations involved in (\ref{back_substitution}) and (\ref{x_approx}) are parallelized using the memory blocks.

\subsubsection*{Performance Profiles.} To compare a set of solvers the user may be interested in two factors, a lower runtime and a better accuracy. Performance profiles \cite{key-10} evaluate both jointly. Let $S$ and $P$ be respectively a set of solvers and a set of problems. Let $f_{0}(p)$ be the initial objective and $f(p,s)$ the final objective that is reached by solver $s \in S$ when solving problem $p \in P$. The minimum objective the solvers in $S$ attain for a problem $p$ is $f^{*}(p) = \min_{s \in S}f(p,s)$. Given a tolerance $\tau \in (0,1)$ the objective threshold for a problem $p$ is given by 
\begin{equation}
    f_{\tau}(p) = f^{*}(p) + \tau(f^{0}(p) - f^{*}(p))
\end{equation}
and the runtime a solver $s$ needs to reach this threshold is noted $T_{\tau}(p,s)$. It is clear that the most efficient solver $s^{*}$ for a given problem $p$ reaches the threshold with a runtime $T_{\tau}(p,s^{*}) = \min_{s \in S}T_{\tau}(p,s)$. Then, the performance profile of a solver for a relative runtime $\alpha$ is defined as 
\begin{equation}
\rho(s,\alpha) = \frac{100}{|P|} |\{p \in P | T_{\tau}(p,s) \leq \alpha \min_{s \in S} T_{\tau}(p,s)\}|
\end{equation}
Graphically the performance profile of a given solver is the percentage of problems solved faster than the relative runtime $\alpha$ on the x-axis.

\begin{figure*}
\begin{subfigure}{0.50\textwidth}
\includegraphics[width=1\linewidth, height=6.3cm]{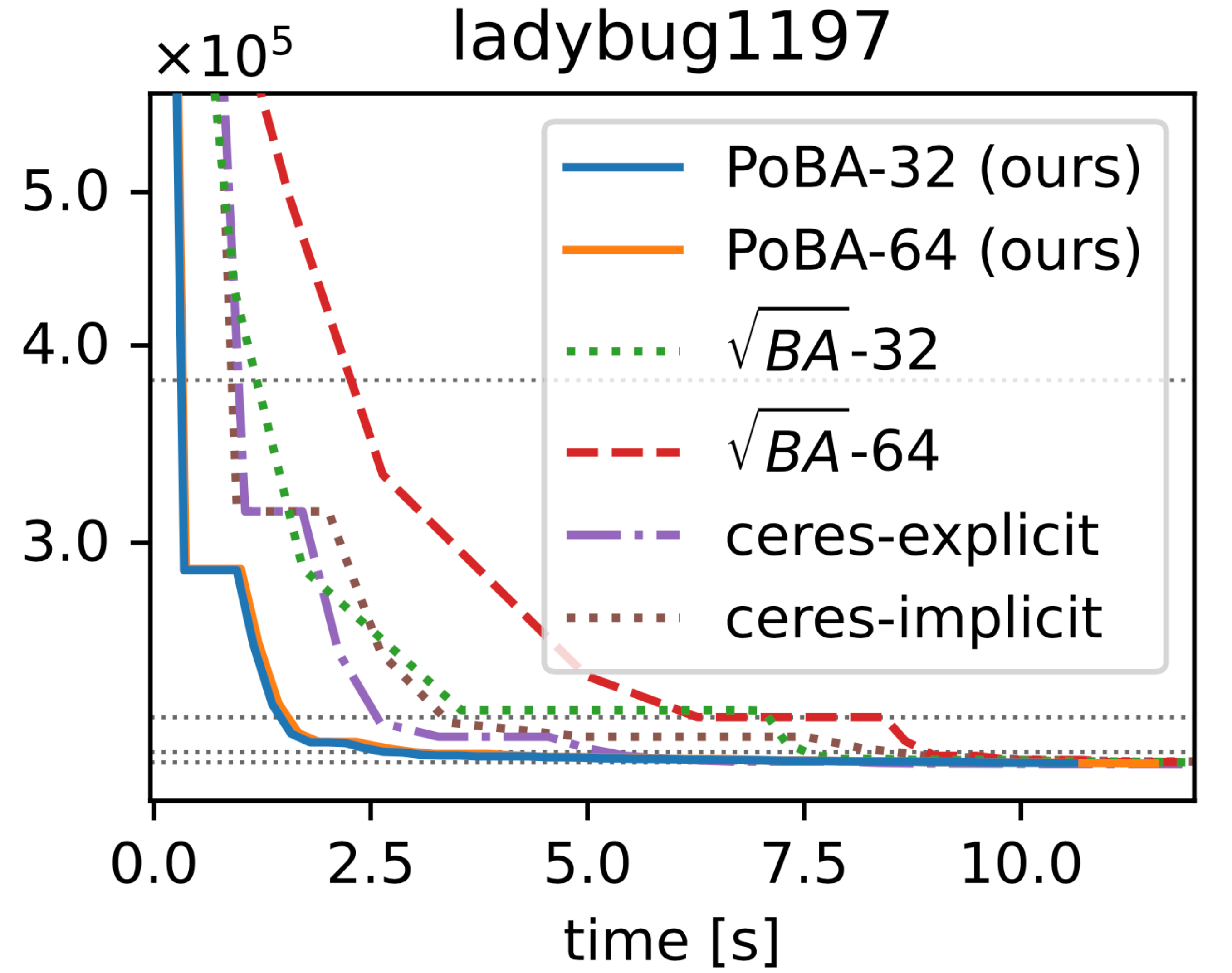} 
\caption*{}%{\textit{Ladybug-1197}}
\label{fig:subim1}
\end{subfigure}
\begin{subfigure}{0.50\textwidth}
\includegraphics[width=1\linewidth, height=6.3cm]{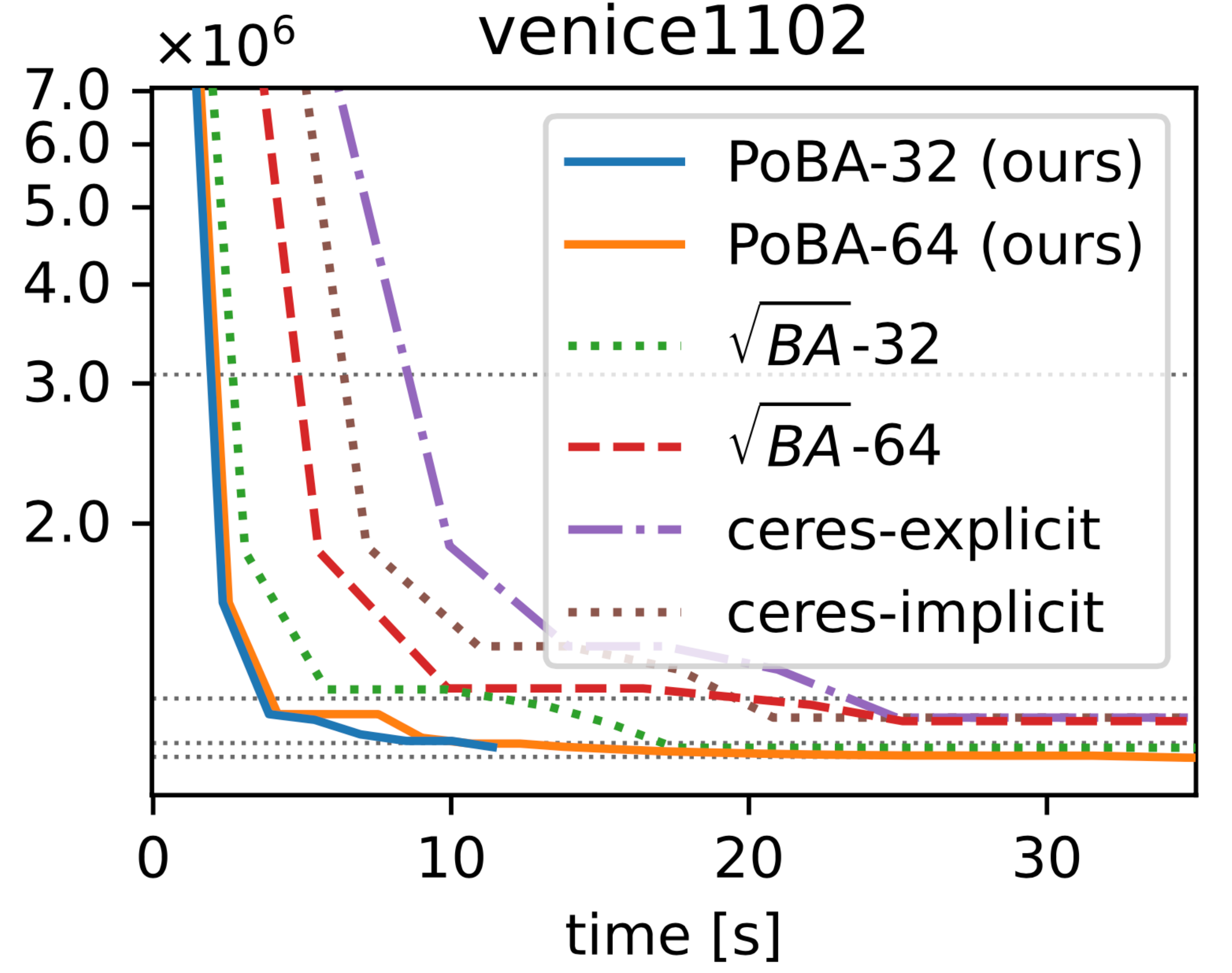}
\caption*{}%\texit{Venice-1102}}
\label{fig:subim2}
\end{subfigure}
%\vspace*{-10mm}
\caption{Convergence plots of \textit{Ladybug-1197} (left) from BAL dataset with $1197$ poses and \textit{Venice-1102} (right) from BAL dataset with $1102$ poses. Fig. \ref{fig:pangolin} shows a visualization of 3D landmarks and camera poses for these problems. The dotted lines correspond to cost thresholds for the tolerances $\tau \in \{0.1, 0.01, 0.003, 0.001\}$.}
\label{fig:convergence_plots}
\end{figure*}

\begin{figure*}
\begin{center}
\includegraphics[scale=0.69]{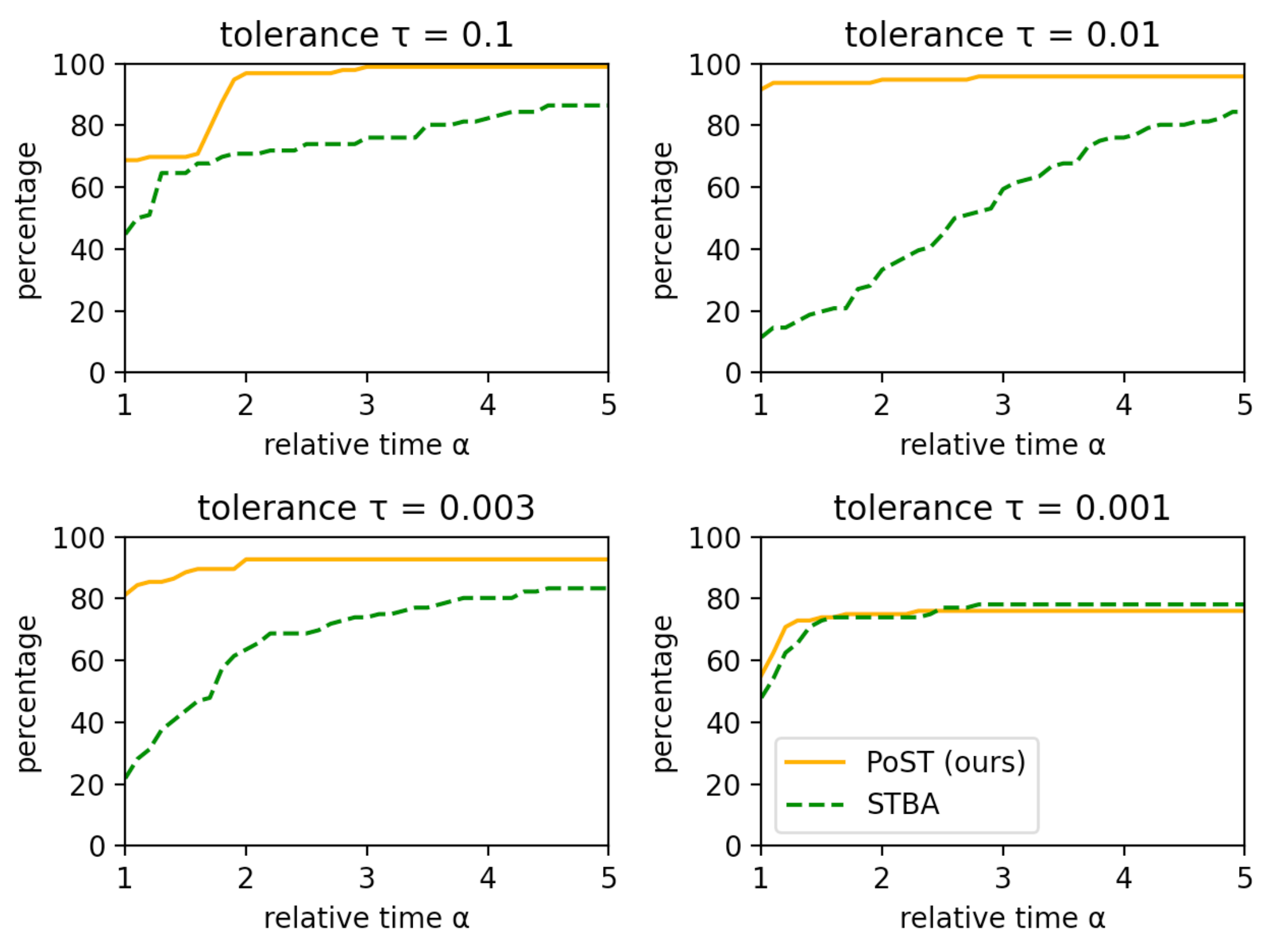}
%\vspace*{-4mm}
\caption{Performance profiles for all BAL problems with stochastic framework. Our proposed solver PoST outperforms the challenging STBA across all accuracy tolerances $\tau \in \{0.1, 0.01, 0.003\}$, both in terms of speed and precision, and rivals STBA for $\tau = 0.001$.} \label{fig:stba_performance}
\end{center}
\end{figure*}

\subsection{Experimental Settings}\label{parameters}

\subsubsection*{Dataset.}
For our extensive evaluation we use all $97$ bundle adjustment problems from the BAL project page. They are divided within five problems families. \textit{Ladybug} is composed with images captured by a vehicle with regular rate. Images of \textit{Venice}, \textit{Trafalgar} and \textit{Dubrovnik} come from Flickr.com and have been saved as skeletal sets \cite{key-1}. Recombination of these problems with additional leaf images leads to the \textit{Final} family. Details about these problems can be found in Appendix.

\subsubsection*{LM loop.}\label{baseline_comparison} \textit{PoBA} is in line with the implementation \cite{demmel2021rootba} and with Ceres. Starting with damping parameter $10^{-4}$ we update $\lambda$ depending on the success or failure of the LM loop. We set the maximal number of LM iterations to $50$, terminating earlier if a relative function tolerance of $10^{-6}$ is reached. Concerning (\ref{x_approx}) and (\ref{qcriterion}) we set the maximal number of inner iterations to $20$ and a threshold $\epsilon = 0.01$. Ceres and $\sqrt{BA}$ use same forcing sequence for the inner CG loop, where the maximal number of iterations is set to $500$. We add a small Gaussian noise to disturb initial landmark and camera positions.

\begin{figure*}
\begin{subfigure}{0.50\textwidth}%scale = 0.38
\includegraphics[scale=0.36]{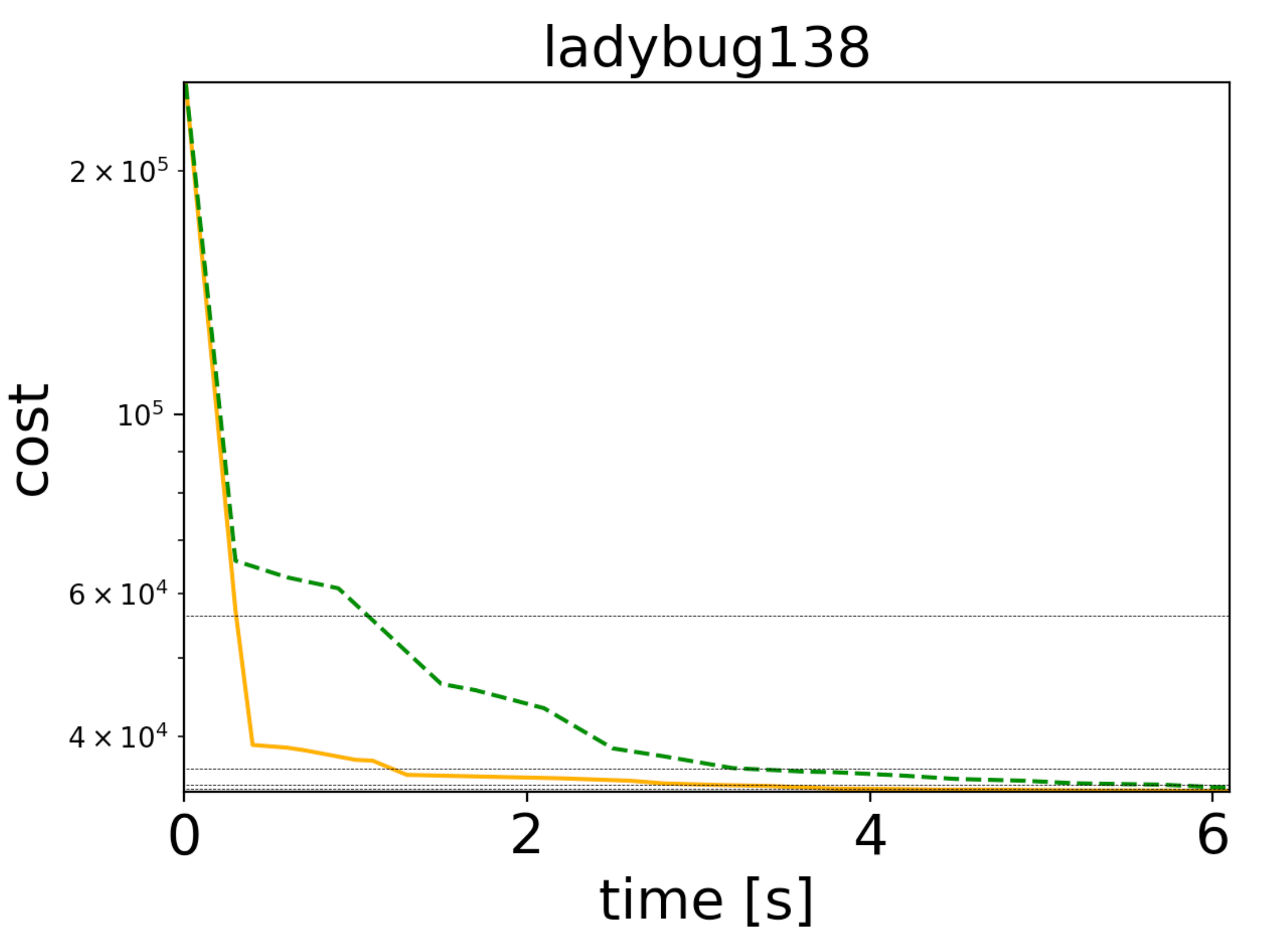} 
\caption*{}%{\textit{Ladybug-138}}
\label{fig:subim1}
\end{subfigure}
\begin{subfigure}{0.50\textwidth}
\includegraphics[scale=0.36]{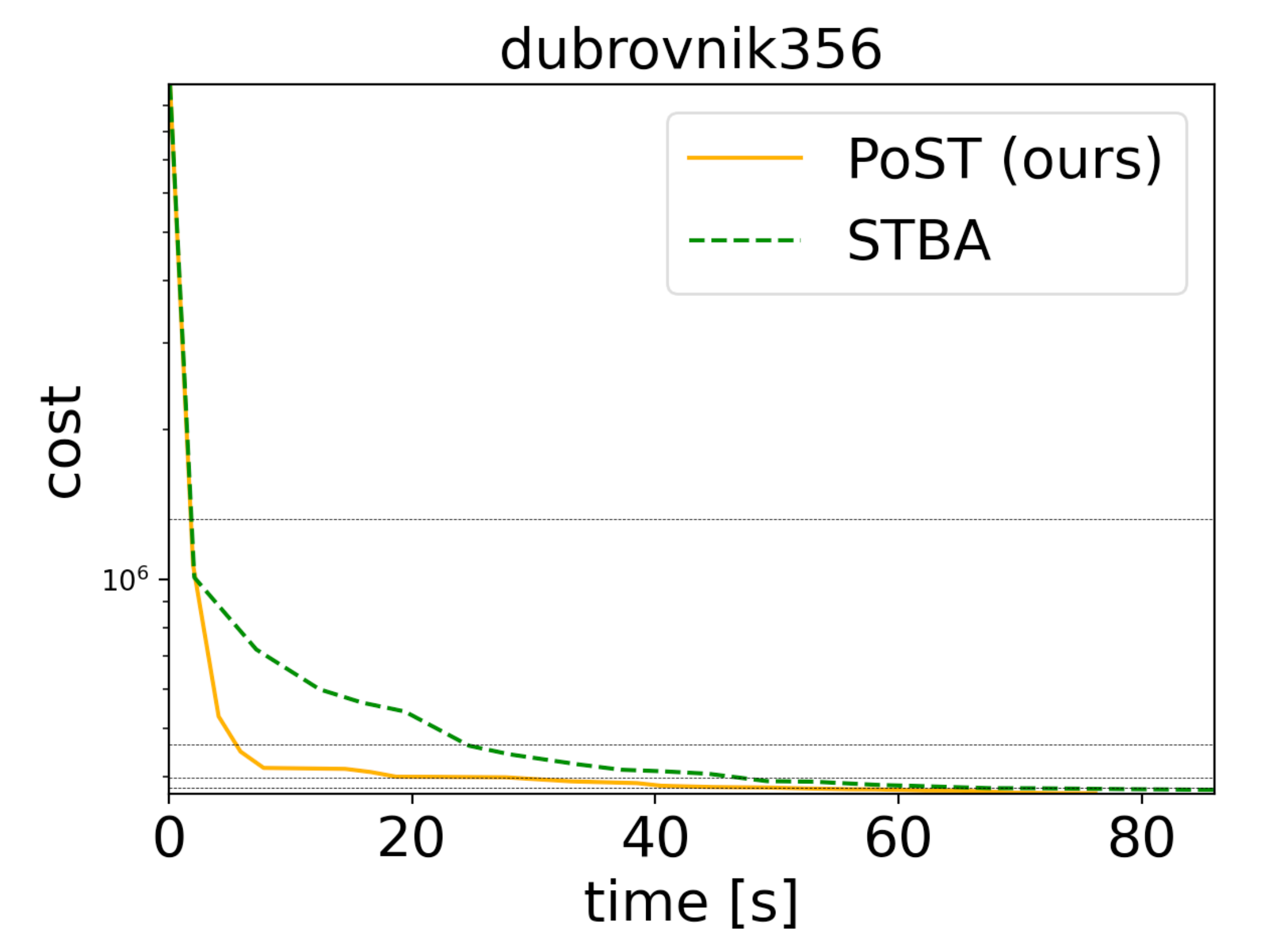}
\caption*{}%{\texit{Dubrovnik-356}}
\label{fig:subim2}
\end{subfigure}
\vspace*{-12mm}
\caption{Convergence plots of \textit{Ladybug-138} (left) from BAL dataset with $138$ poses and \textit{Dubrovnik-356} (right) from BAL dataset with $356$ poses. The dotted lines correspond to cost thresholds for the tolerances $\tau \in \{0.1, 0.01, 0.003, 0.001\}$.}
\label{fig:convergence_plots_stba}
\end{figure*}

%\vspace*{-2mm}
\subsection{Analysis}
%\vspace*{-2mm}
\Cref{fig:performance} shows the performance profiles for all BAL datasets with tolerances $\tau \in \{0.1,0.01,0.003,0.001\}$. For $\tau = 0.1$ and $\tau = 0.01$ \textit{PoBA-$64$} clearly outperforms all challengers both in terms of runtime and accuracy. \textit{PoBA-$64$} remains clearly the best solver for the excellent accuracy $\tau = 0.003$ until a high relative time $\alpha = 4$. For higher relative time it is competitive with $\sqrt{BA}-32$ and still outperforms all other challengers. Same conclusion can be drawn from the convergence plot of two differently sized BAL problems (see \Cref{fig:convergence_plots}).
\Cref{fig:consumption} highlights the low memory consumption of \textit{PoBA} with respect to its challengers for all BAL problems. Whatever the size of the problem \textit{PoBA} is much less memory-consuming than $\sqrt{BA}$ and Ceres. Notably it requires almost five times less memory than $\sqrt{BA}$ and almost twice less memory than Ceres-implicit and Ceres-explicit.

%\vspace*{-1mm}
\subsection{Power Stochastic Bundle Adjustment (PoST)}\label{stba}
\subsubsection*{Stochastic Bundle Adjustment.} 
%\vspace*{-1mm}
STBA decomposes the reduced camera system into clusters inside the Levenberg-Marquardt iterations. The per-cluster linear sub-problems are then solved in parallel with dense $LL^{\top}$ factorization due to the dense connectivity inside camera clusters. 
As shown in \cite{key-4} this approach outperforms the baselines in terms of runtime and scales to very large BA problems, where it can even be used for distributed optimization. In the following we show that replacing the sub-problem solver with our Power Bundle Adjustment can significantly boost runtime even further.
%It is a very challenging solver in terms of efficiency and scalability. 

%\vspace*{-1mm}
%\subsubsection*{Power Stochastic Bundle Adjustment (PoST).}
%\vspace*{-2mm}
We extend STBA\footnote{\url{https://github.com/zlthinker/STBA}} by incorporating our solver instead of the dense $LL^{\top}$ factorization. Each subproblem is then solved with a power series expansion of the inverse Schur complement with the same parameters as in Section~\ref{baseline_comparison}. In accordance to \cite{key-4} we set the maximal cluster size to $100$ and the implementation is written in double in C++.

\vspace*{-3mm}
\subsubsection*{Analysis.}
\vspace*{-2mm}
\Cref{fig:stba_performance} presents the performance profiles with all BAL problems for different tolerances $\tau$. Both solvers have similar accuracy for $\tau = 0.001$. For $\tau \in \{0.1, 0.01, 0.003\}$, PoST clearly outperforms STBA both in terms of runtime and accuracy, most notably for $\tau = 0.01$. Same observations are done when we plot the convergence for differently sized BAL problems (see \Cref{fig:convergence_plots_stba}).

\vspace*{-2mm}
\section{Conclusion}
\vspace*{-2mm}
We introduce a new class of large-scale bundle adjustment solvers that makes use of a power expansion of the inverse Schur complement. We prove the theoretical validity of the proposed approximation and the convergence of this solver. Moreover, we experimentally confirm that the proposed power series representation of the inverse Schur complement outperforms competitive iterative solvers in terms of speed, accuracy, and memory-consumption. Last but not least, we show that the power series representation can complement distributed bundle adjustment methods to significantly boost its performance for large-scale 3D reconstruction. 

\vspace*{-2mm}
\subsection*{Acknowledgement}
This work was supported by the ERC Advanced Grant SIMULACRON, the Munich Center for Machine Learning, the EPSRC Programme Grant VisualAI EP/T028572/1, and the DFG projects WU 959/1-1 and CR 250 20-1 “Splitting Methods for 3D Reconstruction and SLAM".

%%%%%%%%% REFERENCES
% {\small
% \bibliographystyle{ieee_fullname}
% \bibliography{egbib}
% }

%\newpage

%\title{Power Bundle Adjustment \\ for Large-Scale 3D Reconstruction \\ Appendix}
%\maketitle
%\appendix
%\title{Power Bundle Adjustment \\ for Large-Scale 3D Reconstruction \\ Appendix}
%\maketitle

\newpage
%\clearpage
\onecolumn
\appendix
%\title{Power Bundle Adjustment for Large-Scale 3D Reconstruction Appendix}
\vspace*{\stretch{1.0}}
\begin{center}
    \Large\textbf{Power Bundle Adjustment for Large-Scale 3D Reconstruction \\ Appendix}\\
\end{center}
\vspace*{\stretch{1.0}}

%\appendix
%\title{Power Bundle Adjustment \\ for Large-Scale 3D Reconstruction \\ Appendix}
%\maketitle

In this supplementary material we provide additional details to augment the content of the main paper. Section \ref{sec:supp-prop1-proof} contains a proof of Proposition 1 in the main paper. In Section \ref{sec:consistence-noises} we evaluate different levels of noises to highlight the consistence of our solver. In Section \ref{sec:detailed-pp} we tabulate the percentage of solved problems of the performance profiles (Sec. 5.2.) for each tolerance $\tau \in \{0.1, 0.01, 0.03, 0.001\}$ and for each solver. In Section \ref{sec:supp-problems} we list the evaluated problems from the BAL dataset.

\section{Proof of Proposition 1}
\label{sec:supp-prop1-proof}

Firstly, simple product expansion gives
\begin{equation}\label{eq}
    (I - M)(I + ... + M^{i}) = I - M^{i+1} \, .
\end{equation}
%By definition of a sub-multiplicative norm it is straightforward to see that
Since the spectral norm is sub-multiplicative and
\begin{equation}
    \lVert M \rVert < 1 \, ,
\end{equation}
it is straightforward that
\begin{equation}
    \lVert M^{i} \rVert \leq \lVert M \rVert^{i} \underset{ i \to \infty } \longrightarrow 0 \, .
\end{equation}
Thus,
\begin{equation}
    M^{i} \underset{ i \to \infty } \longrightarrow \boldsymbol{0} \, .
\end{equation}
Taking the limit of both sides in (\ref{eq}) gives (1).

Secondly, 
\begin{equation}
    R = \sum_{i=m+1}^{\infty} M^{i} = M^{m+1}\sum_{i=0}^{\infty}M^{i} = M^{m+1}(I-M)^{-1} \, .
\end{equation}
It follows that
\begin{equation}
    \lVert R \rVert = \lVert M^{m+1}\sum_{i=0}^{\infty} M^{i} \rVert \leq \lVert M \rVert ^{m+1}\sum_{i=0}^{\infty}\lVert M \rVert ^{i} \, .
\end{equation}
Since $\lVert M \rVert < 1$ we have
\begin{equation}
    \sum_{i=0}^{\infty}\lVert M \rVert ^{i} = \frac{1}{1 - \lVert M \rVert }\, ,
\end{equation}
which directly leads to the inequality
\begin{equation}
    \lVert R \rVert \leq \frac{\lVert M \rVert ^{m+1}}{1-\lVert M \rVert} \, .
\end{equation}

\section{Consistence}
\label{sec:consistence-noises}

In Sec. 5.2. initial landmark and camera positions are perturbated with a small Gaussian noise $(m,\sigma) = (0,0.01)$. We observe that the relative performance of solvers is similar for different noise levels. Fig. \ref{fig:performance_1} and \ref{fig:performance_2} illustrate the consistence of our results with different initial noises $\sigma = 0.05 $ and $\sigma = 0.1$. 

\begin{figure*}
\begin{center}
\includegraphics[scale=0.45]{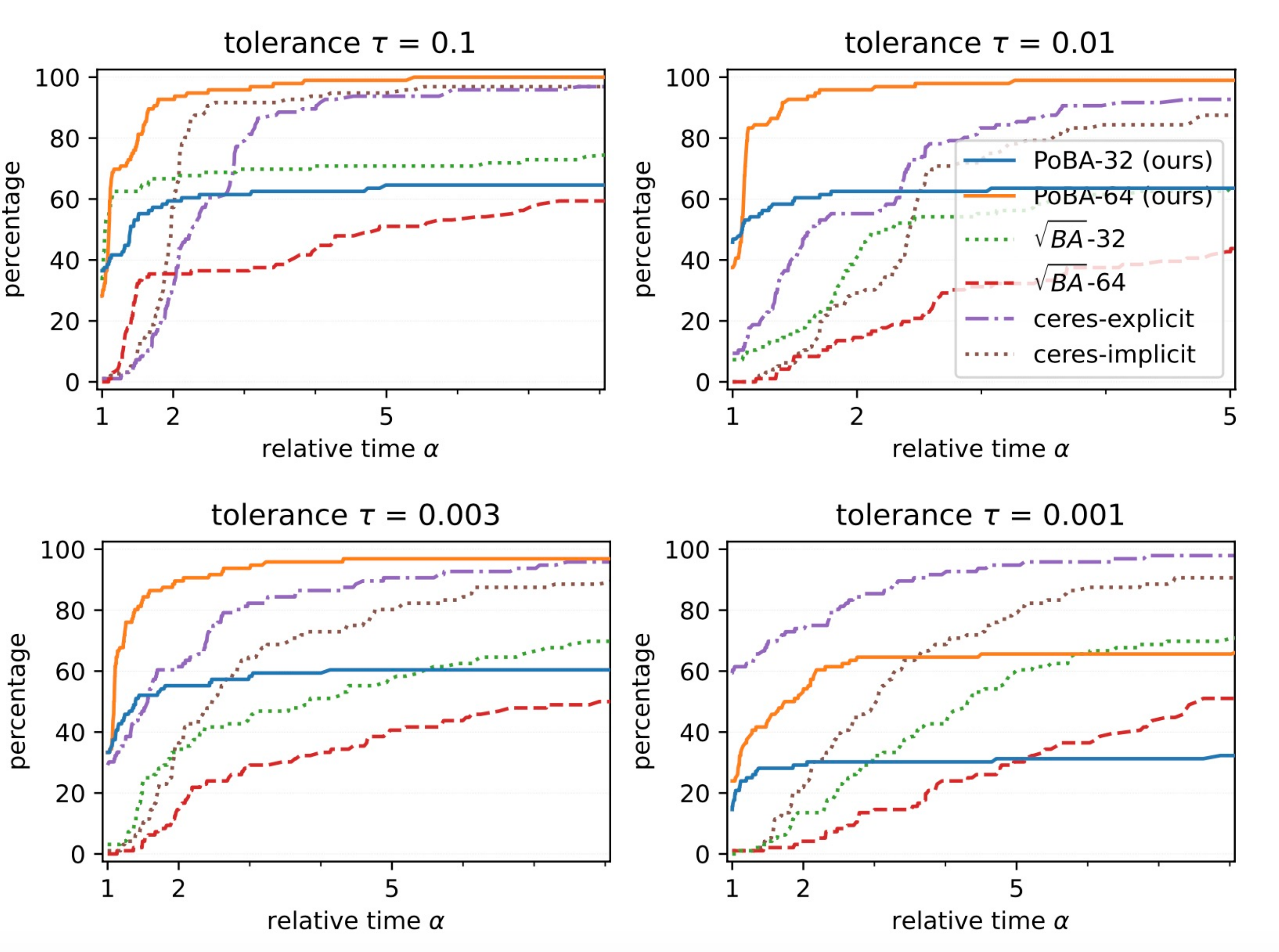}
\caption{Performance profiles for all BAL problems show the percentage of problems solved to a given accuracy tolerance $\tau \in \{0.1, 0.01, 0.003, 0.001\}$ with relative runtime $\alpha$. Initial landmark and camera positions are disturbed with a Gaussian noise $(0,0.05)$.}
\label{fig:performance_1}
\end{center}
\end{figure*}

\begin{figure*}
\begin{center}
\includegraphics[scale=0.45]{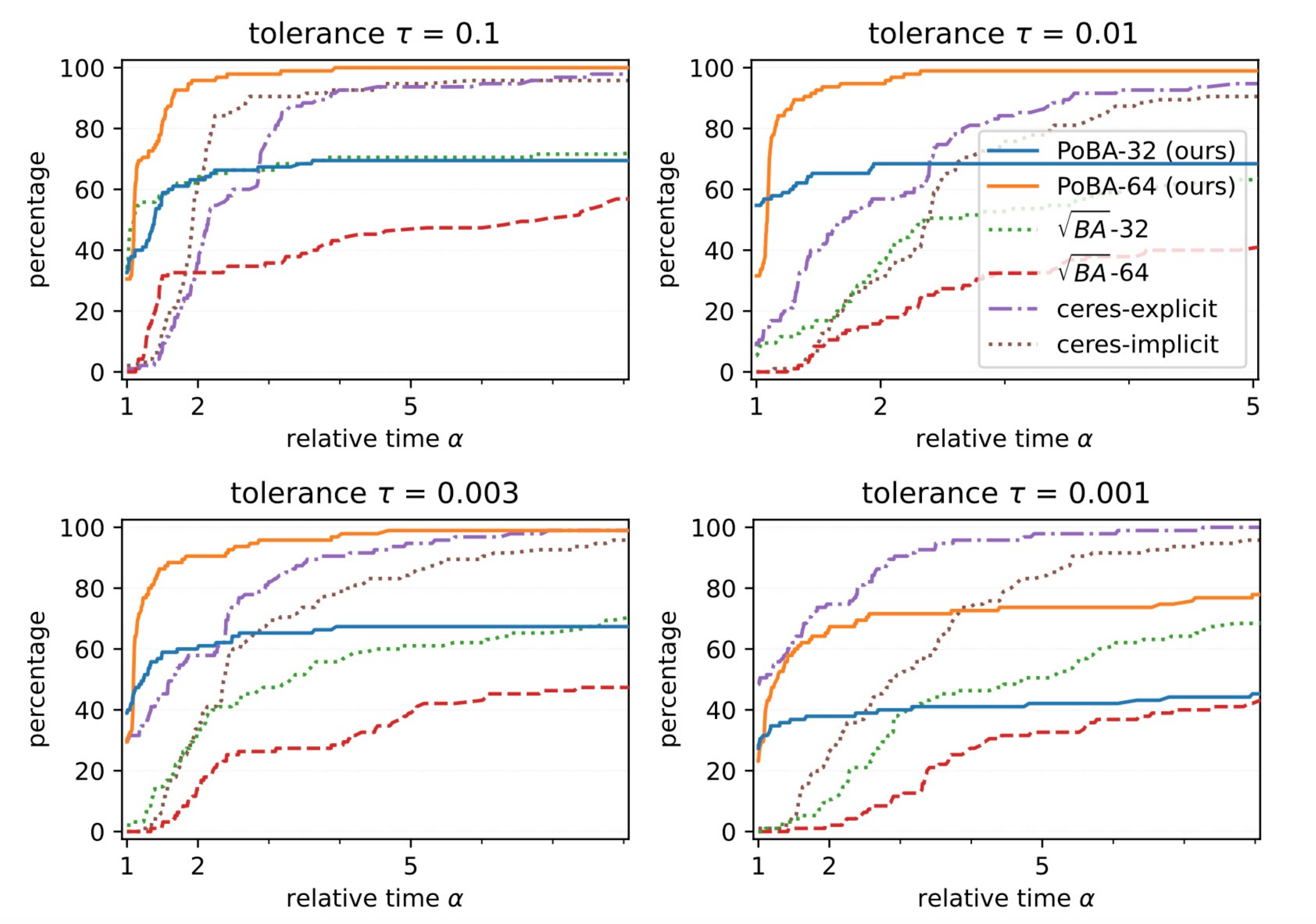}
\caption{Performance profiles for all BAL problems show the percentage of problems solved to a given accuracy tolerance $\tau \in \{0.1, 0.01, 0.003, 0.001\}$ with relative runtime $\alpha$. Initial landmark and camera positions are disturbed with a Gaussian noise $(0,0.1)$.}
\label{fig:performance_2}
\end{center}
\end{figure*}

\newpage

\clearpage
\onecolumn

\section{Tables of solved problems associated to the performance profiles}
\label{sec:detailed-pp}

\begin{table}[h]
 \begin{center}
 \begin{tabular}{ |p{2.4cm} || p{1cm}|p{1cm}|p{1.1cm}|  }
    \hline
    Solver & $\alpha = 1$ & $\alpha = 3$ & $\alpha = \infty$ \\
    \hline
    PoBA-$64$ (ours) & $7\%$ & \underline{$100\%$} & \underline{$100\%$} \\
    PoBA-$32$ (ours) & \underline{$62\%$} & $88\%$ & $88\%$ \\
    $\sqrt{BA}$-$64$ & $0\%$ & $60\%$ & $81\%$ \\
    $\sqrt{BA}$-$32$ & $31\%$ & $90\%$ & $98\%$ \\
    ceres-explicit & $0\%$ & $49\%$ & $95\%$ \\
    ceres-implicit & $0\%$ & $81\%$ & $95\%$ \\
    \hline
 \end{tabular}
 %\caption{$\tau = 0.1$}
 %\end{center}
\hfil
 %\begin{center}
 \begin{tabular}{ |p{2.4cm} || p{1cm}|p{1cm}|p{1.1cm}|  }
    \hline
    Solver & $\alpha = 1$ & $\alpha = 3$ & $\alpha = \infty$ \\
    \hline
    PoBA-$64$ (ours) & $18\%$ & \underline{$98\%$} & \underline{$98\%$} \\
    PoBA-$32$ (ours) & \underline{$60\%$} & $84\%$ & $84\%$ \\
    $\sqrt{BA}$-$64$ & $0\%$ & $62\%$ & $79\%$ \\
    $\sqrt{BA}$-$32$ & $22\%$ & $83\%$ & $97\%$ \\
    ceres-explicit & $2\%$ & $66\%$ & $90\%$ \\
    ceres-implicit & $0\%$ & $80\%$ & $90\%$ \\
    \hline
 \end{tabular}
 %\caption{$\tau = 0.01$}
 %\end{center}
 
\medskip

  %\begin{center}
 \begin{tabular}{ |p{2.4cm} || p{1cm}|p{1cm}|p{1.1cm}|  }
    \hline
    Solver & $\alpha = 1$ & $\alpha = 3$ & $\alpha = \infty$ \\
    \hline
    PoBA-$64$ (ours) & $20\%$ & \underline{$90\%$} & $93\%$ \\
    PoBA-$32$ (ours) & \underline{$44\%$} & $75\%$ & $79\%$ \\
    $\sqrt{BA}$-$64$ & $0\%$ & $58\%$ & $84\%$ \\
    $\sqrt{BA}$-$32$ & $22\%$ & \underline{$90\%$} & \underline{$98\%$} \\
    ceres-explicit & $14\%$ & $71\%$ & $90\%$ \\
    ceres-implicit & $0\%$ & $66\%$ & $91\%$ \\
    \hline
 \end{tabular}
%\caption{$\tau = 0.003$}
 %\end{center}
\hfil
  %\begin{center}
 \begin{tabular}{ |p{2.4cm} || p{1cm}|p{1cm}|p{1.1cm}|  }
    \hline
    Solver & $\alpha = 1$ & $\alpha = 3$ & $\alpha = \infty$ \\
    \hline
    PoBA-$64$ (ours) & $13\%$ & $52\%$ & $58\%$ \\
    PoBA-$32$ (ours) & $8\%$ & $26\%$ & $27\%$ \\
    $\sqrt{BA}$-$64$ & $0\%$ & $63\%$ & $85\%$ \\
    $\sqrt{BA}$-$32$ & $21\%$ & \underline{$88\%$} & \underline{$98\%$} \\
    ceres-explicit & \underline{$54\%$} & $83\%$ & $90\%$ \\
    ceres-implicit & $4\%$ & $75\%$ & $90\%$ \\
    \hline
     %\caption{$\tau = 0.001$}
 \end{tabular}
 \caption{Percentage of solved problems of the performance profiles (Sec. 5.2.) for each solver and for tolerance $\tau = 0.1$ (upper left), $\tau = 0.01$ (upper right), $\tau = 0.003$ (lower left) and $\tau = 0.001$ (lower right). We conclude that PoBA is particularly well suited for very fast/low-accurate ($\tau = 0.1$), fast/medium-accurate ($\tau = 0.01$) and slow/high-accurate ($\tau = 0.003$) applications.}
 \end{center}
 \end{table}

%\newpage

%\newpage

%\clearpage
%\onecolumn

\section{Problems Table}
\label{sec:supp-problems}{

% Note: \input is not expandable, causing errors in longtable; hence \ExpandableInput
% see: https://latex.org/forum/viewtopic.php?p=15523#p15523
\makeatletter
\newcommand*\ExpandableInput[1]{\@@input#1 }
\makeatother
\setlength{\tabcolsep}{2.3em}
\setlength{\LTcapwidth}{0.99\textwidth}
\begin{longtable}{l r r r}%
\label{tab:problem-size}
\endfirsthead
\endhead
\toprule
\ExpandableInput{problem_size.tex}
\bottomrule
\caption{List of all 97 BAL problems including number of cameras, landmarks and observations. These are the problems evaluated for performance profiles in the main paper.}
\end{longtable}
}

\end{document}